\def\eqref#1{equation~\ref{#1}}
\def\Eqref#1{Equation~\ref{#1}}
\def\1{\bm{1}}
\def\eps{{\epsilon}}
\def\vc{{\bm{c}}}
\def\vh{{\bm{h}}}
\def\vx{{\bm{x}}}
\def\vz{{\bm{z}}}
\def\mW{{\bm{W}}}
\DeclareMathAlphabet{\mathsfit}{\encodingdefault}{\sfdefault}{m}{sl}
\SetMathAlphabet{\mathsfit}{bold}{\encodingdefault}{\sfdefault}{bx}{n}
\newcommand{\KL}{D_{\mathrm{KL}}}
\newcommand{\Var}{\mathrm{Var}}
\newtheorem{lemm}{Lemma}
\newtheorem{prop}{Proposition}
\newtheorem{defn}{Definition}
\newtheorem{assump}{Assumption}
\newenvironment{proof}{\textit{Proof:}}{\hfill$\square$}
\newcommand{\cmark}{\ding{51}}%
\newcommand{\xmark}{\ding{55}}%
\title{Self-supervised Representation Learning with Relative Predictive Coding}
\author{%
  Yao-Hung Hubert Tsai$^1$, Martin Q. Ma$^1$, Muqiao Yang$^1$, \\
  {\bf Han Zhao}$^{23}$, {\bf  Louis-Philippe Morency}$^1$, {\bf Ruslan Salakhutdinov}$^1$ \\
 $^1$Carnegie Mellon University, $^2$D.E.\ Shaw \& Co., $^3$ University of Illinois at Urbana-Champaign
}
\begin{document}

\maketitle

\begin{abstract}
This paper introduces Relative Predictive Coding (RPC), a new contrastive representation learning objective that maintains a good balance among training stability, minibatch size sensitivity, and downstream task performance. The key to the success of RPC is two-fold. First, RPC introduces the relative parameters to regularize the objective for boundedness and low variance. Second, RPC contains no logarithm and exponential score functions, which are the main cause of training instability in prior contrastive objectives. We empirically verify the effectiveness of RPC on benchmark vision and speech self-supervised learning tasks. Lastly, we relate RPC with mutual information (MI) estimation, showing RPC can be used to estimate MI with low variance \footnote{Project page: \url{https://github.com/martinmamql/relative_predictive_coding}}.

\end{abstract}

\section{Introduction}
 
Unsupervised learning has drawn tremendous attention recently because it can extract rich representations without label supervision. Self-supervised learning, a subset of unsupervised learning, learns representations by allowing the data to provide supervision \citep{devlin2018bert}. Among its mainstream strategies, self-supervised contrastive learning has been successful in visual object recognition \citep{he2020momentum, tian2019contrastive, chen2020big}, speech recognition \citep{oord2018representation, riviere2020unsupervised}, language modeling \citep{kong2019mutual}, graph representation learning \citep{velickovic2019deep} and reinforcement learning \citep{kipf2019contrastive}. The idea of self-supervised contrastive learning is to learn latent representations such that related instances (e.g., patches from the same image; defined as \textit{positive} pairs) will have representations within close distance, while unrelated instances (e.g., patches from two different images; defined as \textit{negative} pairs) will have distant representations \citep{arora2019theoretical}.

Prior work has formulated the contrastive learning objectives as maximizing the divergence between the distribution of related and unrelated instances. In this regard, different divergence measurement often leads to different loss function design. For example, variational mutual information (MI) estimation~\citep{poole2019variational} inspires Contrastive Predictive Coding (CPC)~\citep{oord2018representation}. Note that MI is also the KL-divergence between the distributions of related and unrelated instances~\citep{cover2012elements}. While the choices of the contrastive learning objectives are abundant~\citep{hjelm2018learning,poole2019variational,ozair2019wasserstein}, we point out that there are three challenges faced by existing methods. 

The first challenge is the training stability, where an unstable training process with high variance may be problematic. For example,~\citet{hjelm2018learning,tschannen2019mutual,tsai2020neural} show that the contrastive objectives with large variance cause numerical issues and have a poor downstream performance with their learned representations. The second challenge is the sensitivity to minibatch size, where the objectives requiring a huge minibatch size may restrict their practical usage. For instance, SimCLRv2~\citep{chen2020big} utilizes CPC as its contrastive objective and reaches state-of-the-art performances on multiple self-supervised and semi-supervised benchmarks.
Nonetheless, the objective is trained with a minibatch size of $8,192$, and this scale of training requires enormous computational power. The third challenge is the downstream task performance, which is the one that we would like to emphasize the most. For this reason, in most cases, CPC is the objective that we would adopt for contrastive representation learning, due to its favorable performance in downstream tasks~\citep{tschannen2019mutual,baevski2020wav2vec}.

This paper presents a new contrastive representation learning objective: the Relative Predictive Coding (\textit{RPC}), which attempts to achieve a good balance among these three challenges: training stability, sensitivity to minibatch size, and downstream task performance. At the core of RPC is the {\em relative parameters}, which are used to regularize RPC for its boundedness and low variance. From a modeling perspective, the relative parameters act as a $\ell_2$ regularization for RPC. From a statistical perspective, the relative parameters prevent RPC from growing to extreme values, as well as upper bound its variance. In addition to the relative parameters, RPC contains no logarithm and exponential, which are the main cause of the training instability for prior contrastive learning objectives~\citep{song2019understanding}.

To empirically verify the effectiveness of RPC, we consider benchmark self-supervised representation learning tasks, including visual object classification on CIFAR-10/-100~\citep{krizhevsky2009learning}, STL-10~\citep{coates2011analysis}, and ImageNet~\citep{russakovsky2015imagenet} and speech recognition on LibriSpeech~\citep{panayotov2015librispeech}. Comparing RPC to prior contrastive learning objectives, we observe a lower variance during training, a lower minibatch size sensitivity, and consistent performance improvement. Lastly, we also relate RPC with MI estimation, empirically showing that RPC can estimate MI with low variance.




\vspace{-1mm}
\section{Proposed Method}
\begin{table}[t!]
    \vspace{-4mm}
    \centering
    \caption{\small Different contrastive learning objectives, grouped by measurements of distribution divergence. $P_{XY}$ represents the distribution of related samples (positively-paired), and $P_XP_Y$ represents the distribution of unrelated samples (negatively-paired). 
    $f(x, y)\in\mathcal{F}$ for $\mathcal{F}$ being any class of functions $f:\mathcal{X}\times \mathcal{Y}\rightarrow \mathbb{R}$. 
    $\dagger$: Compared to $J_{\rm CPC}$ and $J_{\rm RPC}$, we empirically find $J_{\rm WPC}$ performs worse on complex real-world image datasets spanning CIFAR-10/-100~\citep{krizhevsky2009learning} and ImageNet~\citep{russakovsky2015imagenet}. 
    }
    \label{tab:objectives}
    \scalebox{0.59}{
    \begin{tabular}{c c c c}
         \toprule
         Objective & \makecell{ Good Training\\Stability} & \makecell{Lower Minibatch\\Size Sensitivity} & \makecell{Good Downstream\\Performance} \\
         \midrule \midrule 
          \multicolumn{4}{c}{relating to KL-divergence between $P_{XY}$ and $P_XP_Y$: 
          {\color{red} $J_{\rm DV}$}~\citep{donsker1975asymptotic}, {\color{red}$J_{\rm NWJ}$}~\citep{nguyen2010estimating}, and {\color{red}$J_{\rm CPC}$}~\citep{oord2018representation}} \\ \cmidrule{1-4}
         ${\color{red}J_{\rm DV}(X,Y)}:={\rm sup}_{f\in\mathcal{F}}\,\mathbb{E}_{P_{XY}}[f(x,y)]-\log (\mathbb{E}_{P_XP_Y}[e^{f(x,y)}])$ & \xmark & \cmark & \xmark \\ 
         ${\color{red}J_{\rm NWJ}(X,Y)}:={\rm sup}_{f\in\mathcal{F}}\,\mathbb{E}_{P_{XY}}[f(x,y)]- \mathbb{E}_{P_XP_Y}[e^{f(x,y) - 1}]$ & \xmark & \cmark & \xmark \\
                  ${\color{red}J_{\rm CPC}(X,Y)} := {\rm sup}_{f\in\mathcal{F}}\,\mathbb{E}_{(x, y_1) \sim P_{XY}, \{y_j\}_{j=2}^{N} \sim P_Y} \Big[ \log \big(e^{f(x, y_1)} / \frac{1}{N}\sum_{j=1}^N e^{f(x, y_j)}\big)\Big]$ & \cmark & \xmark & \cmark \\
         \midrule \midrule 
         \multicolumn{4}{c}{relating to JS-divergence between $P_{XY}$ and $P_XP_Y$: {\color{red}$J_{\rm JS}$}~\citep{nowozin2016f}} \\ \cmidrule{1-4}
         ${\color{red}J_{\rm JS}(X,Y)}:={\rm sup}_{f\in\mathcal{F}}\,\mathbb{E}_{P_{XY}}[-\log( 1 + e^{-f(x,y)})]-\mathbb{E}_{P_XP_Y}[\log( 1 + e^{f(x,y)})]$ & \cmark & \cmark & \xmark \\
         \midrule \midrule 
         \multicolumn{4}{c}{relating to Wasserstein-divergence between $P_{XY}$ and $P_XP_Y$: {\color{red}$J_{\rm WPC}$}~\citep{ozair2019wasserstein}, with $\mathcal{F_L}$ denoting the space of 1-Lipschitz functions} \\ \cmidrule{1-4}
         ${\color{red}J_{\rm WPC}(X,Y)}:={\rm sup}_{f\in\mathcal{F_L}}\,\mathbb{E}_{(x, y_1) \sim P_{XY}, \{y_j\}_{j=2}^{N} \sim P_Y} \Big[\log \big( e^{f(x, y_1)} / \frac{1}{N}\sum_{j=1}^N e^{f(x, y_j)}\big)\Big]$ & \cmark & \cmark & \xmark$^\dagger$\\ \midrule \midrule 
         \multicolumn{4}{c}{relating to $\chi^2$-divergence between $P_{XY}$ and $P_XP_Y$: {\color{red}$J_{\rm RPC}$} (ours)} \\ \cmidrule{1-4}
         ${\color{red}J_{\rm RPC}(X,Y)}:={\rm sup}_{f\in\mathcal{F}}\,\mathbb{E}_{P_{XY}}[f(x,y)]-\alpha \mathbb{E}_{P_XP_Y}[f(x,y)]-\frac{\beta}{2} \mathbb{E}_{P_{XY}}\left[f^{2}(x,y)\right]-\frac{\gamma}{2}\mathbb{E}_{P_XP_Y}\left[f^{2}(x,y)\right]$ & \cmark & \cmark & \cmark \\
         \bottomrule
    \end{tabular}
    }
    \vspace{-2mm}
    \vspace{-2mm}
\end{table}

This paper presents a new contrastive representation learning objective - the Relative Predictive Coding (RPC). At a high level, RPC 1) introduces the relative parameters to regularize the objective for boundedness and low variance; and 2) achieves a good balance among the three challenges in the contrastive representation learning objectives: training stability, sensitivity to minibatch size, and downstream task performance. We begin by describing prior contrastive objectives along with their limitations on the three challenges in Section~\ref{subsec:prelim}. Then, we detail our presented objective and its modeling benefits in Section~\ref{subsec:RPC}. An overview of different contrastive learning objectives is provided in Table~\ref{tab:objectives}. We defer all the proofs in Appendix.

\paragraph{Notation} 
We use an uppercase letter to denote a random variable (e.g., $X$), a lower case letter to denote the outcome of this random variable (e.g., $x$), and a calligraphy letter to denote the sample space of this random variable (e.g., $\mathcal{X}$). Next, if the samples $(x,y)$ are related (or positively-paired), we refer $(x,y)\sim P_{XY}$ with $P_{XY}$ being the joint distribution of $X\times Y$. If the samples $(x,y)$ are unrelated (negatively-paired), we refer $(x,y)\sim P_XP_Y$ with $P_XP_Y$ being the product of marginal distributions over $X\times Y$. Last, we define $f\in \mathcal{F}$ for $\mathcal{F}$ being any class of functions $f: \mathcal{X} \times \mathcal{Y} \to \mathbb{R}$.

\subsection{Preliminary}
\label{subsec:prelim}
Contrastive representation learning encourages the {\em contrastiveness} between the positive and the negative pairs of the representations from the related data $X$ and $Y$. Specifically, when sampling a pair of representations $(x,y)$ from their joint distribution ($(x,y)\sim P_{XY}$), this pair is defined as a positive pair; when sampling from the product of marginals ($(x,y)\sim P_XP_Y$), this pair is defined as a negative pair. Then,~\citet{tsai2020neural} formalizes this idea such that the contrastiveness of the representations can be measured by the divergence between $P_{XY}$ and $P_XP_Y$, where higher divergence suggests better contrastiveness. To better understand prior contrastive learning objectives, we categorize them in terms of different divergence measurements between $P_{XY}$ and $P_XP_Y$, with their detailed objectives presented in Table~\ref{tab:objectives}. 


We instantiate the discussion using Contrastive Predictive Coding \citep[$J_{\rm CPC}$]{oord2018representation}, which is a lower bound of $D_{\rm KL}(P_{XY}\,\|\,P_XP_Y)$ with $D_{\rm KL}$ referring to the KL-divergence: 
\begin{equation}
    J_{\rm CPC}(X,Y) := \underset{f\in \mathcal{F}}{\rm sup}\,\mathbb{E}_{(x, y_1) \sim P_{XY}, \{y_j\}_{j=2}^{N} \sim P_Y} \Big[ \log  \frac{e^{f(x, y_1)}}{ \frac{1}{N}\sum_{j=1}^N e^{f(x, y_j)}}\Big].
\label{eq:CPC}
\end{equation}
Then,~\citet{oord2018representation} presents to maximize $J_{\rm CPC}(X,Y)$, so that the learned representations $X$ and $Y$ have high contrastiveness. We note that $J_{\rm CPC}$ has been commonly used in many recent self-supervised representation learning frameworks~\citep{he2020momentum,chen2020simple}, where they constrain the function to be $f(x,y) =  {\rm cosine}(x,y)$ with ${\rm cosine}(\cdot)$ being cosine similarity. Under this function design, maximizing $J_{\rm CPC}$ 
leads the representations of related pairs to be close and representations of unrelated pairs to be distant.

The category of modeling $D_{\rm KL}(P_{XY}\,\|\,P_XP_Y)$ also includes the Donsker-Varadhan objective ($J_{\rm DV}$~\citep{donsker1975asymptotic,belghazi2018mine}) and the Nguyen-Wainright-Jordan objective ($J_{\rm NWJ}$~\citep{nguyen2010estimating,belghazi2018mine}), where~\citet{belghazi2018mine,tsai2020neural} show that $J_{\rm DV}(X,Y) = J_{\rm NWJ}(X,Y) = D_{\rm KL}(P_{XY}\,\|\,P_XP_Y)$.
The other divergence measurements considered in prior work are $D_{\rm JS}(P_{XY}\,\|\,P_XP_Y)$ (with $D_{\rm JS}$ referring to the Jenson-Shannon divergence) and $D_{\rm Wass}(P_{XY}\,\|\,P_XP_Y)$ (with $D_{\rm Wass}$ referring to the Wasserstein-divergence). The instance of modeling $D_{\rm JS}(P_{XY}\,\|\,P_XP_Y)$ is the Jensen-Shannon f-GAN objective \big($J_{\rm JS}$~\citep{nowozin2016f,hjelm2018learning}\big), where $J_{\rm JS}(X,Y)=2\big(D_{\rm JS}(P_{XY}\,\|\,P_XP_Y) - \log 2\big)$.\footnote{$J_{\rm JS}(X,Y)$ achieves its supreme value when $f^*(x,y)=\log (p(x,y) / p(x)p(y))$~\citep{tsai2020neural}. Plug-in $f^*(x,y)$ into $J_{\rm JS}(X,Y)$, we can conclude $J_{\rm JS}(X,Y)=2(D_{\rm JS}(P_{XY}\,\|\,P_XP_Y) - \log 2)$.} The instance of modeling $D_{\rm Wass}(P_{XY}\,\|\,P_XP_Y)$ is the Wasserstein Predictive Coding \big($J_{\rm WPC}$~\citep{ozair2019wasserstein}\big), where $J_{\rm WPC}(X,Y)$ 
modifies $J_{\rm CPC}(X,Y)$ objective (\eqref{eq:CPC}) by searching the function from $\mathcal{F}$ to $\mathcal{F_L}$. $\mathcal{F_L}$ denotes any class of 1-Lipschitz continuous functions from $(\mathcal{X} \times \mathcal{Y})$ to $\mathbb{R}$, and thus $\mathcal{F_L} \subset \mathcal{F}$. \citet{ozair2019wasserstein} shows that $J_{\rm WPC}(X,Y)$ is the lower bound of both $D_{\rm KL}(P_{XY}\,\|\,P_XP_Y)$ and $D_{\rm Wass}(P_{XY}\,\|\,P_XP_Y)$. See Table~\ref{tab:objectives} for all the equations. To conclude, the contrastive representation learning objectives are unsupervised representation learning methods that maximize the distribution divergence between $P_{XY}$ and $P_XP_Y$. The learned representations cause high contrastiveness, and recent work~\citep{arora2019theoretical,tsai2020demystifying} theoretically show that highly-contrastive representations could improve the performance on downstream tasks.

After discussing prior contrastive representation learning objectives, we point out three challenges in their practical deployments: training stability, sensitivity to minibatch training size, and downstream task performance. In particular, the three challenges can hardly be handled well at the same time, where we highlight the conclusions in Table~\ref{tab:objectives}. {\bf \em Training Stability:} The training stability highly relates to the variance of the objectives, where~\citet{song2019understanding} shows that $J_{\rm DV}$ and $J_{\rm NWJ}$ exhibit inevitable high variance due to their inclusion of exponential function. As pointed out by~\citet{tsai2020neural}, $J_{\rm CPC}$, $J_{\rm WPC}$, and $J_{\rm JS}$ have better training stability because $J_{\rm CPC}$ and $J_{\rm WPC}$ can be realized as a multi-class classification task and $J_{\rm JS}$ can be realized as a binary classification task. The cross-entropy loss adopted in $J_{\rm CPC}$, $J_{\rm WPC}$, and $J_{\rm JS}$ is highly-optimized and stable in existing optimization package~\citep{abadi2016tensorflow,paszke2019pytorch}. {\bf \em Sensitivity to minibatch training size:} Among all the prior contrastive representation learning methods, $J_{\rm CPC}$ is known to be sensitive to the minibatch training size \citep{ozair2019wasserstein}. Taking a closer look at~\eqref{eq:CPC}, $J_{\rm CPC}$ deploys an instance selection such that
$y_1$ should be selected from $\{y_1, y_2, \cdots, y_N\}$, with $(x, y_1) \sim P_{XY}$, $(x, y_{j > 1}) \sim P_{X}P_{Y}$ with $N$ being the minibatch size.
Previous work \citep{poole2019variational,song2019understanding, chen2020simple,caron2020unsupervised} showed that a large $N$ results in a more challenging instance selection and forces $J_{\rm CPC}$ to have a better contrastiveness of $y_1$ (related instance for $x$) against $\{y_j\}_{j=2}^{N}$ (unrelated instance for $x$). $J_{\rm DV}$, $J_{\rm NWJ}$, and $J_{\rm JS}$ do not consider the instance selection, and $J_{\rm WPC}$ reduces the minibatch training size sensitivity by enforcing 1-Lipschitz constraint. {\bf \em Downstream Task Performance:} The downstream task performance is what we care the most among all the three challenges.  $J_{\rm CPC}$ has been the most popular objective as it manifests superior performance over the other alternatives~\citep{tschannen2019mutual,tsai2020neural,tsai2020demystifying}. We note that although $J_{\rm WPC}$ shows better performance on Omniglot \citep{lake2015human} and CelebA \citep{liu2015deep} datasets, we empirically find it not generalizing well to CIFAR-10/-100~\citep{krizhevsky2009learning} and ImageNet~\citep{russakovsky2015imagenet}. 

\subsection{Relative Predictive Coding}
\label{subsec:RPC}

In this paper, we present Relative Predictive Coding (RPC), which achieves a good balance among the three challenges mentioned above:
\begin{equation} 
    J_{\rm RPC}(X,Y):=\underset{f\in\mathcal{F}}{\rm sup}\,\mathbb{E}_{P_{XY}}[f(x,y)]-\alpha \mathbb{E}_{P_XP_Y}[f(x,y)]-\frac{\beta}{2} \mathbb{E}_{P_{XY}}\left[f^{2}(x,y)\right]-\frac{\gamma}{2}\mathbb{E}_{P_XP_Y}\left[f^{2}(x,y)\right],
\label{eq:RPC_math_def}
\end{equation}
where $\alpha > 0$, $\beta > 0$, $\gamma >0$ are hyper-parameters and we define them as {\em relative parameters}. Intuitively, $J_{\rm RPC}$ contains no logarithm or exponential, potentially preventing unstable training due to numerical issues. Now, we discuss the roles of $\alpha, \beta, \gamma$. At a first glance, $\alpha$ acts to discourage the scores of $P_{XY}$ and $P_XP_Y$ from being close, and $\beta/\gamma$ acts as a $\ell_2$ regularization coefficient to stop $f$ from becoming large. For a deeper analysis, the relative parameters act to regularize our objective for boundedness and low variance. To show this claim, we first present the following lemma:
\vspace{-1mm}
\begin{lemm}[Optimal Solution for $J_{\rm RPC}$] Let $r(x,y) = \frac{p(x,y)}{p(x)p(y)}$ be the density ratio. $J_{\rm RPC}$ has the optimal solution
$f^*(x,y) = \frac{r(x,y) - \alpha}{\beta \, r(x,y) + \gamma}:=r_{\alpha, \beta, \gamma}(x,y)$ with $-\frac{\alpha}{\gamma}\leq r_{\alpha, \beta, \gamma} \leq \frac{1}{\beta}$.
\label{lemma:ratio}
\end{lemm}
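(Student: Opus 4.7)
The plan is to treat $J_{\rm RPC}$ as a pointwise optimization problem over the value $f(x,y)$, compute the optimum explicitly, and then read off the range from the resulting closed form.

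First I would rewrite the objective as a single integral against a dominating measure (say $p(x)p(y)\,dx\,dy$), namely
\begin{equation*}
J_{\rm RPC}(X,Y) \;=\; \sup_{f\in\mathcal{F}} \int p(x)p(y)\,\Phi\bigl(x,y,f(x,y)\bigr)\,dx\,dy,
\end{equation*}
where $\Phi(x,y,t) = r(x,y)\,t - \alpha t - \tfrac{\beta}{2}\,r(x,y)\,t^2 - \tfrac{\gamma}{2}\,t^2$ with $r(x,y)=p(x,y)/(p(x)p(y))$. Since $\mathcal{F}$ is unrestricted and the integrand depends on $f$ only through its value $t = f(x,y)$, the supremum can be achieved by pointwise maximization of $\Phi(x,y,\cdot)$ at each $(x,y)$.

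Second, I would fix $(x,y)$ and optimize in $t$. Because $\partial^2_t \Phi = -\beta r(x,y) - \gamma < 0$, the function $t \mapsto \Phi(x,y,t)$ is strictly concave and has a unique maximizer obtained from $\partial_t \Phi = 0$:
\begin{equation*}
r(x,y) - \alpha - \bigl(\beta\, r(x,y) + \gamma\bigr)\,t \;=\; 0 \quad\Longrightarrow\quad f^*(x,y) \;=\; \frac{r(x,y) - \alpha}{\beta\,r(x,y) + \gamma} \;=\; r_{\alpha,\beta,\gamma}(x,y).
\end{equation*}
Strict concavity (ensured by $\beta,\gamma>0$) guarantees this is a global maximum pointwise, hence optimal for $J_{\rm RPC}$ among all measurable $f$.

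Third, I would derive the range bound by analyzing the scalar map $g:[0,\infty)\to\mathbb{R}$, $g(r) = (r-\alpha)/(\beta r + \gamma)$, since $r(x,y)\ge 0$. A direct computation gives
\begin{equation*}
g'(r) \;=\; \frac{\alpha\beta + \gamma}{(\beta r + \gamma)^2} \;>\; 0,
\end{equation*}
so $g$ is strictly increasing on $[0,\infty)$ with $g(0) = -\alpha/\gamma$ and $\lim_{r\to\infty} g(r) = 1/\beta$. Therefore $-\alpha/\gamma \le r_{\alpha,\beta,\gamma}(x,y) < 1/\beta$, which yields the stated bounds.

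The main subtlety is the justification for exchanging the supremum over the function class $\mathcal{F}$ with the pointwise maximization; this is standard when $\mathcal{F}$ is taken to be sufficiently rich (e.g., all measurable functions) so that the pointwise optimizer $r_{\alpha,\beta,\gamma}$ lies in $\mathcal{F}$. Otherwise the argument is essentially calculus, and I do not expect any further obstacle.
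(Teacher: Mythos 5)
Your proof is correct and follows essentially the same route as the paper's: the paper works directly with first- and second-order functional derivatives of $J_{\rm RPC}$, while you make the same variational argument explicit by reducing to pointwise strict concavity of the integrand against the dominating measure $p(x)p(y)$. Your analysis of $g(r)=(r-\alpha)/(\beta r+\gamma)$ for the range bound is a minor elaboration of the paper's one-line observation that $0\le r<\infty$ directly yields $-\alpha/\gamma \le r_{\alpha,\beta,\gamma} \le 1/\beta$.
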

\vspace{-1mm}
Lemma~\ref{lemma:ratio} suggests that $J_{\rm RPC}$ achieves its supreme value at the ratio $r_{\alpha, \beta, \gamma}(x,y)$ indexed by the relative parameters $\alpha, \beta, \gamma$ (i.e., we term $r_{\alpha, \beta, \gamma}(x,y)$ as the relative density ratio). We note that $r_{\alpha, \beta, \gamma}(x,y)$ is an increasing function w.r.t. $r(x,y)$ and is nicely bounded even when $r(x,y)$ is large. We will now show that the bounded $r_{\alpha, \beta, \gamma}$ suggests
the empirical estimation of $J_{\rm RPC}$ has boundeness and low variance. In particular, let $\{x_i, y_i\}_{i=1}^n$ be $n$ samples drawn uniformly at random from $P_{XY}$ and $\{x'_j, y'_j\}_{j=1}^m$ be $m$ samples drawn uniformly at random from $P_XP_Y$. 
Then, we use neural networks
to empirically estimate $J_{\rm RPC}$ as $\hat{J}^{m,n}_{\rm RPC}$:
\begin{defn}[$\hat{J}^{m,n}_{\rm RPC}$, empirical estimation of $J_{\rm RPC}$] We parametrize $f$ via a family of neural networks $\mathcal{F}_\Theta:= \{f_\theta: \theta\in\Theta\subseteq\mathbb{R}^d
\}$ where $d \in\mathbb{N}$ and $\Theta$ is compact. Then,
$\hat{J}^{m,n}_{\rm RPC} = \sup_{{f}_\theta \in \mathcal{F}_\Theta} \frac{1}{n}\sum_{i=1}^n f_\theta(x_i,y_i) -  \frac{1}{m}\sum_{j=1}^m \alpha f_\theta(x'_j,y'_j) - \frac{1}{n}\sum_{i=1}^n  \frac{\beta}{2} f_\theta^2(x_i,y_i) -  \frac{1}{m}\sum_{j=1}^m \frac{\gamma}{2} f_\theta^2(x'_j,y'_j).
$
\label{def:emp_obj}
\end{defn}
\vspace{-4mm}
\begin{prop}[Boundedness of $\hat{J}^{m,n}_{\rm RPC}$, informal] $0 \leq J_{\rm RPC} \leq \frac{1}{2\beta}+\frac{\alpha^2}{2\gamma}$. Then, with probability at least $1-\delta$, $|J_{\rm RPC} - \hat{J}^{m,n}_{\rm RPC}| =  O(\sqrt{\frac{d + {\rm log}\,(1/\delta)}{n'}}),$ where $n' = {\rm min}\,\{n, m\}$. 
\label{prop:RPC_bound}
\end{prop}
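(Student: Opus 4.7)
The plan is to prove the two halves of the proposition separately: first the deterministic bound $0 \leq J_{\rm RPC} \leq \frac{1}{2\beta} + \frac{\alpha^2}{2\gamma}$, and then the stochastic concentration result.

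For the deterministic bound, the lower bound is immediate by evaluating the objective at the zero function, which gives $0$, so the supremum is at least $0$. For the upper bound, I would split the integrand by distribution:
\[ J_{\rm RPC} = \sup_{f \in \mathcal{F}} \Big\{ \mathbb{E}_{P_{XY}}\big[f - \tfrac{\beta}{2}f^2\big] + \mathbb{E}_{P_X P_Y}\big[-\alpha f - \tfrac{\gamma}{2}f^2\big] \Big\}. \]
Both integrands are concave quadratics in $f(x,y)$. Pointwise maximizing $t \mapsto t - \tfrac{\beta}{2}t^2$ at $t = 1/\beta$ yields $1/(2\beta)$, and pointwise maximizing $t \mapsto -\alpha t - \tfrac{\gamma}{2}t^2$ at $t = -\alpha/\gamma$ yields $\alpha^2/(2\gamma)$. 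Since each expectation is bounded above by the pointwise maximum of its integrand, summing gives the claim. This is consistent with Lemma~\ref{lemma:ratio}, whose optimizer $r_{\alpha,\beta,\gamma}(x,y)$ smoothly interpolates between these two pointwise maximizers as $r(x,y)$ varies in $[0,\infty)$.

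For the concentration statement, I would apply a standard uniform-deviation argument over the parametric class $\mathcal{F}_\Theta$ (interpreting the left-hand side as the $\mathcal{F}_\Theta$-restricted version of $J_{\rm RPC}$, since the empirical quantity is defined over $\mathcal{F}_\Theta$). Writing $J(\theta)$ and $\hat{J}^{m,n}(\theta)$ for the population and empirical objectives at $f_\theta$, the elementary inequality $|\sup_\theta J(\theta) - \sup_\theta \hat{J}^{m,n}(\theta)| \leq \sup_{\theta \in \Theta} |J(\theta) - \hat{J}^{m,n}(\theta)|$ reduces the task to controlling a single uniform deviation. I would then split this deviation into its four empirical-mean components ($n^{-1}\sum f_\theta$, $m^{-1}\sum f_\theta$, $n^{-1}\sum f_\theta^2$, $m^{-1}\sum f_\theta^2$) and bound each by: (i) Hoeffding or McDiarmid for fixed $\theta$, using the implicit uniform boundedness of $f_\theta$; (ii) a standard $\varepsilon$-net of the compact set $\Theta \subset \mathbb{R}^d$, of cardinality $(C/\varepsilon)^d$, contributing $\sqrt{d \log(1/\varepsilon)}$ via a union bound; and (iii) Lipschitz continuity of $\theta \mapsto f_\theta$ to extend from the net to all of $\Theta$, choosing $\varepsilon \asymp 1/n'$. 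Summing the four contributions and absorbing $\sqrt{1/n} + \sqrt{1/m} \leq 2\sqrt{1/n'}$ produces the advertised $O\big(\sqrt{(d + \log(1/\delta))/n'}\big)$ rate.

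The main obstacle lies entirely on the concentration side. The uniform boundedness of $f_\theta$ over $\Theta \times \mathcal{X} \times \mathcal{Y}$ and its Lipschitz dependence on $\theta$ are not stated as hypotheses but are required for both Hoeffding and the covering-number step; these are precisely the assumptions concealed by the ``informal'' qualifier, and are typically justified by a bounded output activation together with compactness of the input domain. Once those regularity conditions are in place, the argument is largely bookkeeping, with the main care needed in packaging separate $n$ and $m$ deviations into a single $n' = \min(n,m)$ rate without introducing spurious $\mathcal{F}_\Theta$-to-$\mathcal{F}$ approximation error.
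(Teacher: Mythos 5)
Your proof is correct in substance and, on the boundedness half, actually takes a slightly cleaner and more elementary route than the paper's.

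For the deterministic bound, the paper proves $J_{\rm RPC}\geq 0$ by first rewriting $J_{\rm RPC}=\frac{\beta+\gamma}{2}\,\mathbb{E}_{P'}[r_{\alpha,\beta,\gamma}^2]$ via the mixture distribution $P'$ (which is manifestly nonnegative), and then derives the upper bound by plugging the characterized maximizer $f^*=r_{\alpha,\beta,\gamma}$ from Lemma~\ref{lemma:ratio} back into the objective to get
$J_{\rm RPC}=\tfrac12\mathbb{E}_{P_{XY}}[r_{\alpha,\beta,\gamma}]-\tfrac{\alpha}{2}\mathbb{E}_{P_XP_Y}[r_{\alpha,\beta,\gamma}]$, then invoking $-\alpha/\gamma\leq r_{\alpha,\beta,\gamma}\leq 1/\beta$. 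Your route --- evaluating at $f\equiv 0$ for the lower bound, then splitting the objective by distribution and using $\sup_f\{A(f)+B(f)\}\leq\sup_f A(f)+\sup_f B(f)$ together with pointwise maximization of the two concave quadratics $t\mapsto t-\tfrac{\beta}{2}t^2$ and $t\mapsto-\alpha t-\tfrac{\gamma}{2}t^2$ --- reaches the identical bound $\tfrac{1}{2\beta}+\tfrac{\alpha^2}{2\gamma}$ without needing the optimal-$f^*$ characterization at all, so it is self-contained and arguably preferable for this proposition in isolation.

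For the concentration half, your estimation machinery (Hoeffding on the four empirical-mean terms, a $(C/\varepsilon)^d$-net of compact $\Theta$, Lipschitz extension from net to $\Theta$, union bound) is exactly the structure of the paper's Lemma~\ref{lemm:estimation}, and the two regularity hypotheses you flag as ``concealed'' are precisely the paper's Assumptions~\ref{assump:bound_f} (boundedness of $f_\theta$, justified by restricting the range to $[-\alpha/\gamma,1/\beta]$ without loss) and~\ref{assump:smooth_f} (Lipschitzness in $\theta$). The one genuine departure is that you choose to \emph{redefine} the left-hand side as the $\mathcal{F}_\Theta$-restricted supremum and then invoke $|\sup_\theta J(\theta)-\sup_\theta\hat J^{m,n}(\theta)|\leq\sup_\theta|J(\theta)-\hat J^{m,n}(\theta)|$. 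The paper keeps $J_{\rm RPC}$ as the supremum over the full class $\mathcal{F}$ and explicitly closes the $\mathcal{F}_\Theta$-to-$\mathcal{F}$ gap with the universal-approximation Lemma~\ref{lemma:nn} of \citet{hornik1989multilayer}, producing an extra $\varepsilon/2$ approximation term that is absorbed into the big-$O$. This is not a spurious technicality: as stated, the proposition compares $J_{\rm RPC}$ over $\mathcal{F}$ with $\hat J^{m,n}_{\rm RPC}$ over $\mathcal{F}_\Theta$, so the approximation step is needed for the claim as written, even if the resulting expansion (a \emph{fixed} network-dependent approximation error being dominated by a vanishing statistical error) is itself only informally justified in the paper. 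You are right, though, that in the ``informal'' spirit of the proposition, both treatments are at roughly the same level of rigor, and your version is cleanly stated as a uniform-convergence fact about the parametric class.
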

\vspace{-3mm}
\begin{prop}[Variance of $\hat{J}^{m,n}_{\rm RPC}$, informal] \label{prop:variance_informal} There exist universal constants $c_1$ and $c_2$ that depend only on $\alpha, \beta, \gamma$, such that $
\Var [\hat{J}^{m,n}_{{\rm RPC}}] = O \Big(\frac{c_1}{n} + \frac{c_2}{m} \Big).
$
\end{prop}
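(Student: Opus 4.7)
The plan is to apply the Efron--Stein inequality to $\hat{J}^{m,n}_{\rm RPC}$, viewed as a function of the $n+m$ mutually independent samples $\{(x_i,y_i)\}_{i=1}^{n}$ and $\{(x'_j,y'_j)\}_{j=1}^{m}$. The starting point is a uniform boundedness reduction: by Lemma~\ref{lemma:ratio} the population optimizer $f^*$ lies in $[-\alpha/\gamma,\,1/\beta]$, and since the pointwise integrand in $J_{\rm RPC}$ is strictly concave in $f(x,y)$ (the $\beta$ and $\gamma$ terms are quadratic with negative coefficients), there is no loss in restricting the supremum in $\hat J^{m,n}_{\rm RPC}$ to $f_\theta$ with $\|f_\theta\|_\infty \leq M$ for some $M = M(\alpha,\beta,\gamma)\geq \max\{1/\beta,\alpha/\gamma\}$. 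Combined with the compactness of $\Theta$, this lets me treat $\mathcal{F}_\Theta$ as uniformly bounded by $M$.

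Next, I would establish a bounded differences property. Fix $i\in\{1,\dots,n\}$ and let $\hat{J}_i'$ denote the estimator with $(x_i,y_i)$ replaced by an independent copy $(\tilde x_i,\tilde y_i)\sim P_{XY}$. Using $|\sup_\theta A - \sup_\theta B|\leq \sup_\theta|A-B|$, and noting that only the $i$-th term in the first and third averages depends on $(x_i,y_i)$,
\begin{equation*}
|\hat{J}^{m,n}_{\rm RPC}-\hat{J}_i'| \leq \frac{1}{n}\sup_\theta\Bigl(|f_\theta(x_i,y_i)|+|f_\theta(\tilde x_i,\tilde y_i)|+\tfrac{\beta}{2}f_\theta^2(x_i,y_i)+\tfrac{\beta}{2}f_\theta^2(\tilde x_i,\tilde y_i)\Bigr) \leq \frac{2M+\beta M^2}{n}.
\end{equation*}
The symmetric computation for replacing an $(x'_j,y'_j)$-sample yields the bound $(2\alpha M+\gamma M^2)/m$.

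Plugging these bounded differences into Efron--Stein,
\begin{equation*}
\Var\bigl[\hat{J}^{m,n}_{\rm RPC}\bigr] \leq \frac{1}{2}\sum_{i=1}^{n}\Bigl(\frac{2M+\beta M^2}{n}\Bigr)^{2}+\frac{1}{2}\sum_{j=1}^{m}\Bigl(\frac{2\alpha M+\gamma M^2}{m}\Bigr)^{2} = \frac{c_1}{n}+\frac{c_2}{m},
\end{equation*}
with $c_1 = (2M+\beta M^2)^2/2$ and $c_2 = (2\alpha M+\gamma M^2)^2/2$, each depending only on $\alpha,\beta,\gamma$ through $M$.

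The main obstacle is the first step: justifying the reduction to a uniformly bounded class $\mathcal{F}_\Theta$, since without it the bounded-differences argument collapses. I expect the formal argument to mirror the one underpinning Proposition~\ref{prop:RPC_bound}, where Lemma~\ref{lemma:ratio} together with the concavity of the pointwise objective certifies that truncating $f_\theta$ at $\pm M$ leaves the supremum unchanged (up to vanishing approximation error from the parametrization). After that reduction, the Efron--Stein step is essentially mechanical, and the two independent sample batches contribute additively, giving the advertised $O(c_1/n + c_2/m)$ rate.
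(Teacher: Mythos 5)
Your Efron--Stein argument is correct and takes a genuinely different route from the paper. The paper gives two proofs of this proposition. The first rewrites $\hat{J}^{m,n}_{\rm RPC}$ as averages of bounded summands and invokes the variance of a mean of $n$ (resp.\ $m$) bounded i.i.d.\ terms — but it does not explicitly account for the supremum over $\theta$, which makes $\hat{J}^{m,n}_{\rm RPC}$ a statistic depending jointly on all $n+m$ samples rather than a plain sample mean. The second is asymptotic: under correct specification it expands $\hat{J}^{m,n}_{\rm RPC}$ around the population optimizer $\theta^*$ and shows the supremum contributes only an $o_p(1/\sqrt{n'})$ term, after which the classical $1/n$, $1/m$ variance formula at fixed $\theta^*$ applies. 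Your Efron--Stein route buys you a finite-sample, non-asymptotic handle on the supremum via $|\sup_\theta A - \sup_\theta B| \leq \sup_\theta |A-B|$, avoiding both the silent omission in the paper's first proof and the delta-method machinery of the second; your constants $c_1 = (2M+\beta M^2)^2/2$, $c_2 = (2\alpha M+\gamma M^2)^2/2$ play the same role as the paper's $\max\{((2\alpha\gamma+\beta\alpha^2)/2\gamma^2)^2,(1/2\beta)^2\}$ and $\max\{(\alpha^2/2\gamma)^2,((2\alpha\beta+\gamma)/2\beta^2)^2\}$. The one soft spot, which you correctly flag, is the uniform-boundedness reduction: both you and the paper rely on $\|f_\theta\|_\infty \leq M$, and the concavity argument justifies only that a pointwise truncation of $f_\theta$ at $[-\alpha/\gamma, 1/\beta]$ cannot decrease the objective — it does not show the truncated function lies in $\mathcal{F}_\Theta$. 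The paper sidesteps this by stating boundedness outright as Assumption~\ref{assump:bound_f}; you should do the same rather than derive it, after which the rest of your argument is airtight.
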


From the two propositions, when $m$ and $n$ are large, i.e., the sample sizes are large, $\hat{J}^{m,n}_{\rm RPC}$ is bounded, and its variance vanishes to $0$. First, the boundedness of $\hat{J}^{m,n}_{\rm RPC}$ suggests $\hat{J}^{m,n}_{\rm RPC}$ will not grow to extremely large or small values. Prior contrastive learning objectives with good training stability (e.g., $J_{\rm CPC}$/$J_{\rm JS}$/$J_{\rm WPC}$) also have the boundedness of their objective values. For instance, the empirical estimation of ${J}_{\rm CPC}$ is less than ${\rm log}\,N$ (\eqref{eq:CPC})~\citep{poole2019variational}. Nevertheless, ${J}_{\rm CPC}$ often performs the best only when minibatch size is large, and empirical performances of $J_{\rm JS}$ and $J_{\rm WPC}$ are not as competitive as ${J}_{\rm CPC}$. Second, the upper bound of the variance implies the training of $\hat{J}^{m,n}_{\rm RPC}$ can be stable, and in practice we observe a much smaller value than the stated upper bound. On the contrary, \citet{song2019understanding} shows that the empirical estimations of ${J}_{\rm DV}$ and ${J}_{\rm NWJ}$ exhibit inevitable variances that grow exponentially with the true $\KL(P_{XY} \Vert P_XP_Y)$. 

Lastly, similar to prior contrastive learning objective that are related to distribution divergence measurement, we associate $J_{\rm RPC}$ with the Chi-square divergence $D_{\chi^2}(P_{XY}\,\|\,P_XP_Y) = \mathbb{E}_{P_XP_Y}[r^2(x,y)] - 1$~\citep{nielsen2013chi}. The derivations are provided in Appendix. By having $P' = \frac{\beta}{\beta+\gamma}P_{XY} + \frac{\gamma}{\beta+\gamma}P_XP_Y$ as the mixture distribution of $P_{XY}$ and $P_XP_Y$, we can rewrite $J_{\rm RPC}(X,Y)$ as $J_{\rm RPC}(X,Y) = \frac{\beta + \gamma}{2} \mathbb{E}_{P'}[r_{\alpha, \beta, \gamma}^2(x,y)]$. Hence, $J_{\rm RPC}$ can be regarded as a generalization of $D_{\chi^2}$ with the relative parameters $\alpha, \beta, \gamma$, where $D_{\chi^2}$ can be recovered from $J_{\rm RPC}$ by specializing $\alpha=0$, $\beta=0$ and $\gamma=1$ (e.g., $D_{\chi^2} = 2J_{\rm RPC}|_{\alpha=
\beta=0, \gamma=1} - 1$). Note that $J_{\rm RPC}$ may not be a formal divergence measure with arbitrary $\alpha, \beta , \gamma$.

\section{Experiments} \label{sec:experiment}
We provide an overview of the experimental section. First, we conduct benchmark self-supervised representation learning tasks spanning visual object classification and speech recognition. This set of experiments are designed to discuss the three challenges of the contrastive representation learning objectives: downstream task performance (Section~\ref{subsec:downstream}), training stability (Section~\ref{subsec:stability}), and minibatch size sensitivity (Section~\ref{subsec:minibatch}). We also provide an ablation study on the choices of the relative parameters in $J_{\rm RPC}$ (Section~\ref{subsec: relative_parameter}). On these experiments we found that $J_{\rm RPC}$ achieves a lower variance during training, a lower batch size insensitivity, and consistent performance improvement. Second, we relate $J_{\rm RPC}$ with mutual information (MI) estimation (Section~\ref{subsec:MI}). The connection is that MI is an average statistic of the density ratio, and we have shown that the optimal solution of $J_{\rm RPC}$ is the relative density ratio (see Lemma~\ref{lemma:ratio}). Thus we could estimate MI using the density ratio transformed from the optimal solution of $J_{\rm RPC}$. On these two sets of experiments, we fairly compare $J_{\rm RPC}$ with other contrastive learning objectives. Particularly, across different objectives, we fix the network, learning rate, optimizer, and batch size (we use the default configurations suggested by the original implementations from \citet{chen2020big}, \citet{riviere2020unsupervised} and \citet{tsai2020neural}.) The only difference will be the objective itself. In what follows, we perform the first set of experiments. We defer experimental details in the Appendix.

\vspace{-2mm}
\paragraph{Datasets.} For the visual objective classification, we consider CIFAR-10/-100 \citep{krizhevsky2009learning}, STL-10 \citep{coates2011analysis}, and ImageNet \citep{russakovsky2015imagenet}. CIFAR-10/-100 and ImageNet contain labeled images only, while STL-10 contains labeled and unlabeled images. For the speech recognition, we consider LibriSpeech-100h~\citep{panayotov2015librispeech} dataset, which contains $100$ hours of $16{\rm kHz}$ English speech from $251$ speakers with $41$ types of phonemes. 

\vspace{-2mm}
\paragraph{Training and Evaluation Details.} For the vision experiments, we follow the setup from SimCLRv2~\citep{chen2020big}, which considers visual object recognition as its downstream task. For the speech experiments, we follow the setup from prior work~\citep{oord2018representation, riviere2020unsupervised}, which consider phoneme classification and speaker identification as the downstream tasks. Then, we briefly discuss the training and evaluation details into three modules: 1) related and unrelated data construction, 2) pre-training, and 3) fine-tuning and evaluation. For more details, please refer to Appendix or the original implementations. 

\vspace{-1mm}
\underline{\em $\triangleright\,\,$ Related and Unrelated Data Construction.} In the vision experiment, we construct the related images by applying different augmentations on the same image. Hence, when $(x,y)\sim P_{XY}$, $x$ and $y$ are the same image with different augmentations. The unrelated images are two randomly selected samples. In the speech experiment, we define the current latent feature (feature at time $t$) and the future samples (samples at time $>t$) as related data. In other words, the feature in the latent space should contain information that can be used to infer future time steps. A latent feature and randomly selected samples would be considered as unrelated data.

\vspace{-1mm}
\underline{\em $\triangleright\,\,$ Pre-training.} The pre-training stage refers to the self-supervised training by a contrastive learning objective. Our training objective is defined in Definition~\ref{def:emp_obj}, where we use neural networks to parametrize the function using the constructed related and unrelated data. Convolutional neural networks are used for vision experiments.  Transformers~\citep{vaswani2017attention} and LSTMs~\citep{hochreiter1997long} are used for speech experiments.

\vspace{-1mm}
\underline{\em $\triangleright\,\,$ Fine-tuning and Evaluation.} After the pre-training stage, we fix the parameters in the pre-trained networks and add a small fine-tuning network on top of them. Then, we fine-tune this small network with the downstream labels in the data's training split. For the fine-tuning network, both vision and speech experiments consider multi-layer perceptrons. Last, we evaluate the fine-tuned representations on the data's test split. We would like to point out that we do not normalize the hidden representations encoded by the pre-training neural network for loss calculation. This hidden normalization technique is widely applied \citep{tian2019contrastive, chen2020simple, chen2020big} to stabilize training and increase performance for prior objectives, but we find it unnecessary in $J_{\rm RPC}$.

\subsection{Downstream Task Performances on Vision and Speech} \label{subsec:downstream}

\begin{table}[t!]
\vspace{-4mm}
\caption{\small Top-1 accuracy (\%) for visual object recognition results. $J_{\rm DV}$ and $J_{\rm NWJ}$ are not reported on ImageNet due to numerical instability. ResNet depth, width and Selective Kernel (SK) configuration for each setting are provided in ResNet depth+width+SK column. A slight drop of $J_{\rm CPC}$ performance compared to \citet{chen2020big} is because we only train for $100$ epochs rather than $800$ due to the fact that running 800 epochs uninterruptedly on cloud TPU is very expensive. Also, we did not employ a memory buffer \citep{he2020momentum} to store negative samples. We and we did not employ a memory buffer. We also provide the results from fully supervised models as a comparison \citep{chen2020simple, chen2020big}. Fully supervised training performs worse on STL-10 because it does not employ the unlabeled samples in the dataset \citep{lowe2019putting}.}
\label{table:vision_res}
\begin{center}
\vspace{-4mm} 
\scalebox{0.88}{
\begin{tabular}{c c | c c c c c c | c}
    \toprule
    \multirow{2}{*}{Dataset} & \multirow{2}{*}{ResNet Depth+Width+SK} & \multicolumn{6}{c|}{Self-supervised} & \multirow{2}{*}{Supervised}\\
     & & $J_{\rm DV}$ & $J_{\rm NWJ}$ & $J_{\rm JS}$ & $J_{\rm WPC}$ & $J_{\rm CPC}$ & $J_{\rm RPC}$ & \\
    \midrule
    CIFAR-10 & 18 + 1$\times$ + No SK & 91.10 & 90.54 & 83.55 & 80.02 & 91.12 & \textbf{91.46} & 93.12\\
    CIFAR-10 & 50 + 1$\times$ + No SK & 92.23 & 92.67 & 87.34 & 85.93 & 93.42 & \textbf{93.57} & 95.70\\
    CIFAR-100 & 18 + 1$\times$ + No SK & 77.10 & 77.27 & 74.02 & 72.16 & 77.36 & \textbf{77.98} & 79.11\\
    CIFAR-100 & 50 + 1$\times$ + No SK & 79.02 & 78.52 & 75.31 & 73.23 & 79.31 & \textbf{79.89} & 81.20\\
    STL-10 & 50 + 1$\times$ + No SK & 82.25 & 81.17 & 79.07 & 76.50 & 83.40 & \textbf{84.10} & 71.40 \\
    ImageNet & 50 + 1$\times$ + SK & - & - & 66.21 & 62.10 & 73.48 & \textbf{74.43} & 78.50\\
    ImageNet & 152 + 2$\times$ + SK & - & - & 71.12 & 69.51 & 77.80 & \textbf{78.40} & 80.40\\
    \bottomrule
\end{tabular}}
\end{center}
\vspace{-3mm}
\end{table}

For the downstream task performance in the vision domain, we test the proposed $J_{\rm RPC}$ and other contrastive learning objectives on CIFAR-10/-100 \citep{krizhevsky2009learning}, STL-10 \citep{coates2011analysis}, and ImageNet ILSVRC-2012 \citep{russakovsky2015imagenet}. Here we report the best performances $J_{\rm RPC}$ can get on each dataset (we include experimental details in \ref{appendix:vision}.) Table \ref{table:vision_res} shows that the proposed $J_{\rm RPC}$ outperforms other objectives on all datasets. Using $J_{\rm RPC}$ on the largest network (ResNet with depth of $152$, channel width of $2$ and selective kernels), the performance jumps from $77.80\%$ of $J_{\rm CPC}$ to $78.40\%$ of $J_{\rm RPC}$.

Regarding speech representation learning, the downstream performance for phoneme and speaker classification are shown in Table \ref{tab:speech_res} (we defer experimental details in Appendix \ref{appendix:speech}.) Compared to $J_{\rm CPC}$, $J_{\rm RPC}$ improves the phoneme classification results with $4.8$ percent and the speaker classification results with $0.3$ percent, which is closer to the fully supervised model. Overall, the proposed $J_{\rm RPC}$ performs better than other unsupervised learning objectives on both phoneme classification and speaker classification tasks. 

\begin{table}[t!]
\caption{\small Accuracy (\%) for LibriSpeech-100h phoneme and speaker classification results. We also provide the results from fully supervised model as a comparison \citep{oord2018representation}. }
\label{tab:speech_res}
\vspace{-4mm}
\small
    \begin{center}
        \begin{tabular}{c | cccc | c}
            \toprule
            \multirow{2}{*}{Task Name} & \multicolumn{4}{c|}{Self-supervised} & \multirow{2}{*}{Supervised} \\ 
            & $J_{\rm CPC}$ & $J_{\rm DV}$ & $J_{\rm NWJ}$ & $J_{\rm RPC}$ & \\
            \midrule
            Phoneme classification & 64.6 & 61.27 & 62.09 & \textbf{69.39} & 74.6 \\
            Speaker classification & 97.4 & 95.36 & 95.89 & \textbf{97.68} & 98.5 \\
            \bottomrule
        \end{tabular}
    \end{center}
\vspace{-3mm}
\end{table}


\begin{figure}[t!]
\centering
\includegraphics[width=\textwidth]{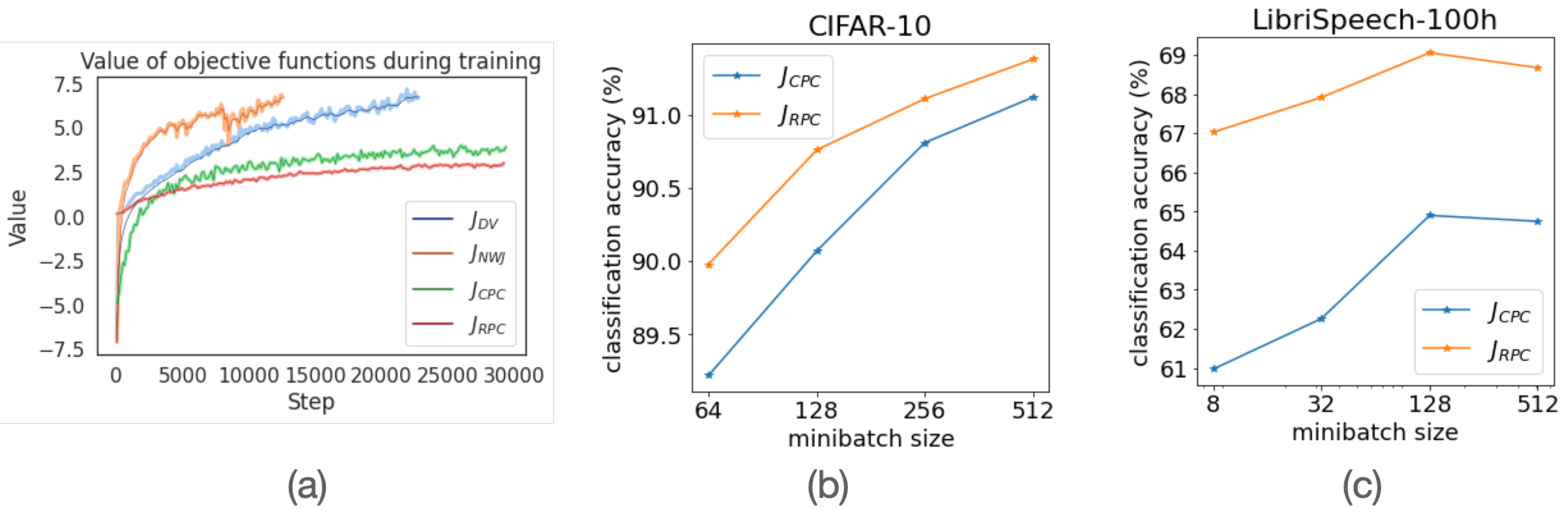}
\vspace{-7mm}
\caption{\small (a) Empirical values of $J_{\rm DV}$, $J_{\rm NWJ}$, $J_{\rm CPC}$ and $J_{\rm RPC}$ performing visual object recognition on CIFAR-10. $J_{\rm DV}$ and $J_{\rm NWJ}$ soon explode to NaN values and stop the training (shown as early stopping in the figure), while $J_{\rm CPC}$ and $J_{\rm RPC}$ are more stable. Performance comparison of $J_{\rm CPC}$ and $J_{\rm RPC}$ on (b) CIFAR-10 and (c) LibriSpeech-100h with different minibatch sizes, showing that the performance of $J_{\rm RPC}$ is less sensitive to minibatch size change compared to $J_{\rm CPC}$.}
\label{fig:var_batch}
\end{figure}
\subsection{Training Stability}
\label{subsec:stability}

We provide empirical training stability comparisons on $J_{\rm DV}$, $J_{\rm NWJ}$, $J_{\rm CPC}$ and $J_{\rm RPC}$ by plotting the values of the objectives as the training step increases. We apply the four objectives to the SimCLRv2 framework and train on the CIFAR-10 dataset. All setups of training are exactly the same except the objectives. From our experiments, $J_{\rm DV}$ and $J_{\rm NWJ}$ soon explode to NaN and disrupt training (shown as early stopping in Figure \ref{fig:var_batch}a; extremely large values are not plotted due to scale constraints). On the other hand, $J_{\rm RPC}$ and $J_{\rm CPC}$ has low variance, and both enjoy stable training. As a result, performances using the representation learned from unstable $J_{\rm DV}$ and $J_{\rm NWJ}$ suffer in downstream task, while representation learned by $J_{\rm RPC}$ and $J_{\rm CPC}$ work much better.


\subsection{Minibatch Size Sensitivity}
\label{subsec:minibatch}

We then provide the analysis on the effect of minibatch size on $J_{\rm RPC}$ and $J_{\rm CPC}$, since $J_{\rm CPC}$ is known to be sensitive to minibatch size \citep{poole2019variational}. We train SimCLRv2 \citep{chen2020big} on CIFAR-10 and the model from \citet{riviere2020unsupervised} on LibriSpeech-100h using $J_{\rm RPC}$ and $J_{\rm CPC}$ with different minibatch sizes. The settings of relative parameters are the same as Section \ref{subsec:stability}. From Figure \ref{fig:var_batch}b and \ref{fig:var_batch}c, we can observe that both $J_{\rm RPC}$ and $J_{\rm CPC}$ achieve their optimal performance at a large minibatch size. However, when the minibatch size decreases, the performance of $J_{\rm CPC}$ shows higher sensitivity and suffers more when the number of minibatch samples is small. The result suggests that the proposed method might be less sensitive to the change of minibatch size compared to $J_{\rm CPC}$ given the same training settings.



\subsection{Effect of Relative Parameters} \label{subsec: relative_parameter}
We study the effect of different combinations of relative parameters in $J_{\rm RPC}$ by comparing downstream performances on visual object recognition. We train SimCLRv2 on CIFAR-10 with different combinations of $\alpha, \beta$ and $\gamma$ in $J_{\rm RPC}$ and fix all other experimental settings. We choose $\alpha \in \{0, 0.001,1.0\}, \beta \in \{0, 0.001,1.0\}, \gamma \in \{0, 0.001,1.0\}$ and we report the best performances under each combination of $\alpha, \beta$, and $\gamma$. From Figure \ref{fig:relative_parameters_ablation}, we first observe that $\alpha > 0$ has better downstream performance than $\alpha=0$ when $\beta$ and $\gamma$ are fixed. This observation is as expected, since $\alpha >0$ encourages representations of related and unrelated samples to be pushed away. Then, we find that a small but nonzero $\beta$ ($\beta = 0.001$) and a large $\gamma$ ($\gamma = 1.0$) give the best performance compared to other combinations. Since $\beta$ and $\gamma$ serve as the coefficients of $\ell_2$ regularization, the results imply that the regularization is a strong and sensitive factor that will influence the performance. The results here are not as competitive as Table \ref{table:vision_res} because the CIFAR-10 result reported in Table \ref{table:vision_res} is using a set of relative parameters ($\alpha = 1.0, \beta = 0.005, \gamma = 1.0$) that is different from the combinations in this subsection. Also, we use quite different ranges of $\gamma$ on ImageNet (see \ref{appendix:vision} for details.) In conclusion, we find empirically that a non-zero $\alpha$, a small $\beta$ and a large $\gamma$ will lead to the optimal representation for the downstream task on CIFAR-10.  



\begin{figure}[t!]
\centering
\includegraphics[width=\textwidth]{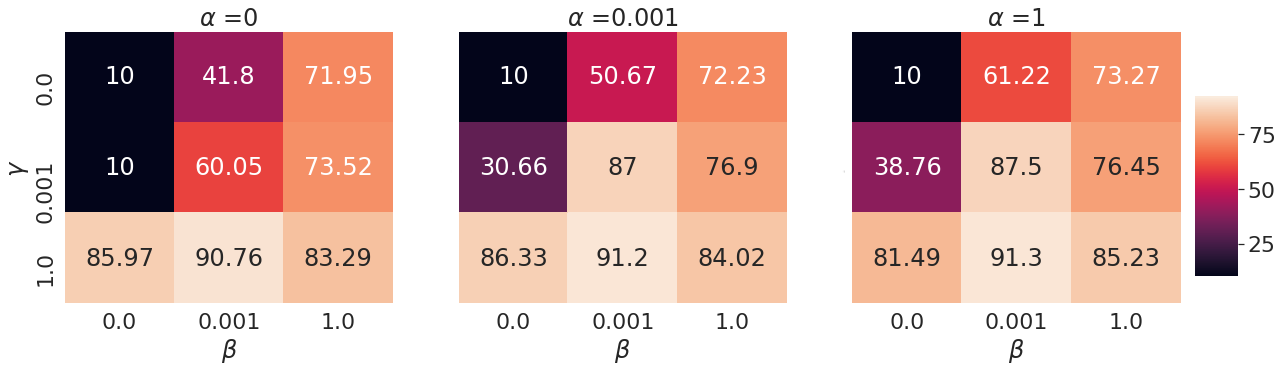}
\caption{\small Heatmaps of downstream task performance on CIFAR-10, using different $\alpha$, $\beta$ and $\gamma$ in the $J_{\rm RPC}$. We conclude that a nonzero $\alpha$, a small $\beta$ $(\beta = 0.001)$ and a large $\gamma (\gamma = 1.0)$ are crucial for better performance.}
\label{fig:relative_parameters_ablation}
\end{figure}

\subsection{Relation to Mutual Information Estimation}
\label{subsec:MI}

The presented approach also closely relates to mutual information estimation. For random variables $X$ and $Y$ with joint distribution $P_{XY}$ and product of marginals $P_XP_Y$, the mutual information is defined as $I(X;Y) = \KL(P_{XY} \Vert P_XP_Y)$. Lemma~\ref{lemma:ratio} states that given optimal solution $f^*(x,y)$ of $J_{\rm RPC}$, we can get the density ratio $r(x,y):=p(x,y)/p(x)p(y)$ as $r(x,y) = \frac{\gamma / \beta + \alpha}{1-\beta f^*(x,y)}-\frac{\gamma}{\beta}$. We can empirically estimate $\hat{r}(x, y)$ from the estimated $\hat{f}(x,y)$ via this transformation, and use $\hat{r}(x, y)$ to estimate mutual information \citep{tsai2020neural}. Specifically, $I(X;Y)\approx \frac{1}{n}\sum_{i=1}^n {\rm log}\,\hat{r}(x_i,y_i)$ with $(x_i,y_i)\sim P^{\otimes n}_{X,Y}$, where $P^{\otimes n}_{X,Y}$ is the uniformly sampled empirical distribution of $P_{X,Y}$.




We follow prior work~\citep{poole2019variational,song2019understanding,tsai2020neural}
for the experiments. We consider $X$ and $Y$ as two $20$-dimensional Gaussians with correlation $\rho$, and our goal is to estimate the mutual information $I(X;Y)$.  Then, we perform a cubic transformation on $y$ so that $y\mapsto y^3$. The first task is referred to as {\bf Gaussian} task and the second is referred to as {\bf Cubic} task, where both have the ground truth $I(X;Y)=-10{\rm log}\,(1-\rho^2)$. The models are trained on $20,000$ steps with $I(X;Y)$ starting at $2$ and increased by $2$ per $4,000$ steps. Our method is compared with baseline methods $J_{\rm CPC}$~\citep{oord2018representation}, $J_{\rm NWJ}$~\citep{nguyen2010estimating}, $J_{\rm JS}$~\citep{nowozin2016f}, SMILE~\citep{song2019understanding} and Difference of Entropies (DoE)~\citep{mcallester2020formal}. All approaches use the same network design, learning rate, optimizer and minibatch size for a fair comparison. First, we observe $J_{\rm CPC}$~\citep{oord2018representation} has the smallest variance, while it exhibits a large bias (the estimated mutual information from $J_{\rm CPC}$ has an upper bound ${\rm log}({\rm batch\,size})$). Second, $J_{\rm NWJ}$~\citep{nguyen2010estimating} and $J_{\rm JSD}$~\citep{poole2019variational} have large variances, especially in the Cubic task. \citet{song2019understanding} pointed out the limitations of $J_{\rm CPC}$, $J_{\rm NWJ}$, and $J_{\rm JSD}$, and developed the SMILE method, which clips the value of the estimated density function to reduce the variance of the estimators.  DoE~\citep{mcallester2020formal} is neither a lower bound nor a upper bound of mutual information, but can achieve accurate estimates
when underlying mutual information is large.  $J_{\rm RPC}$ exhibits comparable bias and lower variance compared to the SMILE method, and is more stable than the DoE method. We would like to highlight our method's low-variance property, where we neither clip the values of the estimated density ratio nor impose an upper bound of our estimated mutual information.

\begin{figure}[t!]
\begin{center}
\includegraphics[scale=0.41]{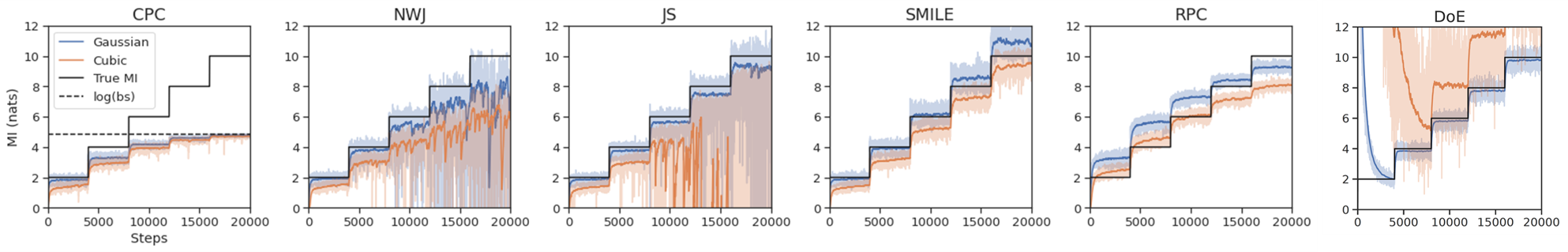}
\end{center}
\vspace{-5mm}
\caption{\small Mutual information estimation performed on 20-d correlated Gaussian distribution, with the correlation increasing each 4K steps. $J_{\rm RPC}$ exhibits smaller variance than SMILE and DoE, and smaller bias than $J_{\rm CPC}$.}
\label{fig:MI}
\end{figure}

\section{Related Work}

As a subset of unsupervised representation learning, self-supervised representation learning (SSL) adopts self-defined signals as supervision and uses the learned representation for downstream tasks, such as object detection and image captioning~\citep{liu2020self}. We categorize SSL work into two groups: when the signal is the input's hidden property or the corresponding view of the input. For the first group, for example, Jigsaw puzzle~\citep{noroozi2016unsupervised} shuffles the image patches and defines the SSL task for predicting the shuffled positions of the image patches. Other instances are Predicting Rotations~\citep{gidaris2018unsupervised} and Shuffle \& Learn~\citep{misra2016shuffle}. For the second group, the SSL task aims at modeling the co-occurrence of multiple views of data, via the contrastive or the predictive learning objectives~\citep{tsai2020demystifying}. The predictive objectives encourage reconstruction from one view of the data to the other, such as predicting the lower part of an image from its upper part (ImageGPT by~\citet{chen2020generative}). Comparing the contrastive with predictive learning approaches,~\citet{tsai2020demystifying} points out that the former requires less computational resources for a good performance but suffers more from the over-fitting problem.

Theoretical analysis~\citep{arora2019theoretical,tsai2020demystifying,tosh2020contrastive} suggests the contrastively learned representations can lead to a good downstream performance. Beyond the theory, \citet{tian2020makes} shows what matters more for the performance are 1) the choice of the contrastive learning objective; and 2) the creation of the positive and negative data pairs in the contrastive objective. Recent work~\citep{khosla2020supervised} extends the usage of contrastive learning from the self-supervised setting to the supervised setting. 
The supervised setting defines the positive pairs as the data from the same class in the contrastive objective, while the self-supervised setting defines the positive pairs as the data with different augmentations.


Our work also closely rates to the {\em skewed divergence} measurement between distributions~\citep{lee-1999-measures,lee2001effectiveness,nielsen2010family,yamada2013relative}. Recall that the usage of the relative parameters plays a crucial role to regularize our objective for its boundness and low variance. This idea is similar to the {\em skewed divergence} measurement, that when calculating the divergence between distributions $P$ and $Q$, instead of considering ${\rm D}(P\,\|\,Q)$, these approaches consider ${\rm D}(P\,\|\,\alpha P + (1-\alpha )Q)$ with $D$ representing the divergence and $0 < \alpha < 1$. A natural example is that the Jensen-Shannon divergence is a symmetric skewed KL divergence: ${D}_{\rm JS}(P\,\|\,Q) = 0.5 {D}_{\rm KL}(P\,\|\,0.5P+0.5Q) + 0.5 {D}_{\rm KL}(Q\,\|\,0.5P+0.5Q)$. Compared to the non-skewed counterpart, the skewed divergence has shown to have a more robust estimation for its value~\citep{lee-1999-measures,lee2001effectiveness,yamada2013relative}. Different from these works that focus on estimating the values of distribution divergence, we focus on learning self-supervised representations.

\section{Conclusion}
In this work, we present RPC, the Relative Predictive Coding, that achieves a good balance among the three challenges when modeling a contrastive learning objective: training stability, sensitivity to minibatch size, and downstream task performance. We believe this work brings an appealing option for training self-supervised models and inspires future work to design objectives for balancing the aforementioned three challenges. In the future, we are interested in applying RPC in other application domains and developing more principled approaches for better representation learning.

\section*{Acknowledgement}
This work was supported in part by the NSF IIS1763562, NSF Awards \#1750439 \#1722822, National Institutes of Health, IARPA
D17PC00340, ONR Grant N000141812861, and Facebook PhD Fellowship.
We would also like to acknowledge NVIDIA's GPU support and Cloud TPU support from Google's TensorFlow Research Cloud (TFRC).




\bibliography{iclr2021_conference}
\bibliographystyle{iclr2021_conference}

\appendix
\section{Appendix}

\subsection{Proof of Lemma 1 in the Main Text}
\begin{lemm}[Optimal Solution for $J_{\rm RPC}$, restating Lemma 1 in the main text] 
Let $$J_{\rm RPC}(X,Y):=\underset{f\in\mathcal{F}}{\rm sup}\,\mathbb{E}_{P_{XY}}[f(x,y)]-\alpha \mathbb{E}_{P_XP_Y}[f(x,y)]-\frac{\beta}{2} \mathbb{E}_{P_{XY}}\left[f^{2}(x,y)\right]-\frac{\gamma}{2}\mathbb{E}_{P_XP_Y}\left[f^{2}(x,y)\right]$$
and $r(x,y) = \frac{p(x,y)}{p(x)p(y)}$ be the density ratio. $J_{\rm RPC}$ has the optimal solution
$$f^*(x,y) = \frac{r(x,y) - \alpha}{\beta \, r(x,y) + \gamma}:=r_{\alpha, \beta, \gamma}(x,y)\,\,{\rm with}\,\,-\frac{\alpha}{\gamma}\leq r_{\alpha, \beta, \gamma} \leq \frac{1}{\beta}.$$
\label{lemma:ratio2}
\end{lemm}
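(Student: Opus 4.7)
The plan is to attack this as a pointwise optimization problem. Writing the four expectations as integrals against $p(x,y)$ and $p(x)p(y)$ and collecting terms, the objective can be rewritten as
\begin{equation*}
J_{\rm RPC} = \sup_{f} \int\!\!\int \Big[\big(p(x,y) - \alpha\, p(x)p(y)\big)\, f(x,y) - \tfrac{1}{2}\big(\beta\, p(x,y) + \gamma\, p(x)p(y)\big)\, f^2(x,y)\Big]\,dx\,dy .
\end{equation*}
Since $\mathcal{F}$ is taken to be unrestricted and the integrand at each $(x,y)$ depends only on the value $f(x,y)$, I would argue that the supremum can be carried out pointwise. At each point the integrand is a concave quadratic in $f(x,y)$ (because $\beta p(x,y)+\gamma p(x)p(y) > 0$ when $\beta,\gamma>0$), so its maximizer is obtained by setting the derivative to zero.

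Solving the first-order condition $\big(p(x,y) - \alpha p(x)p(y)\big) - \big(\beta p(x,y) + \gamma p(x)p(y)\big) f(x,y) = 0$ yields
\begin{equation*}
f^*(x,y) = \frac{p(x,y) - \alpha\, p(x)p(y)}{\beta\, p(x,y) + \gamma\, p(x)p(y)} .
\end{equation*}
Dividing numerator and denominator by $p(x)p(y)$ then gives exactly $f^*(x,y) = (r(x,y)-\alpha)/(\beta\, r(x,y)+\gamma) = r_{\alpha,\beta,\gamma}(x,y)$, which is the stated expression.

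For the bounds, I would view $h(r) := (r-\alpha)/(\beta r + \gamma)$ as a scalar function of $r\in[0,\infty)$, noting $r(x,y)\geq 0$. A quick derivative computation gives $h'(r) = (\gamma + \alpha\beta)/(\beta r + \gamma)^2 > 0$, so $h$ is strictly increasing. The infimum on $[0,\infty)$ is attained at $r=0$, giving $h(0)=-\alpha/\gamma$, and the limit as $r\to\infty$ is $1/\beta$; hence $-\alpha/\gamma \leq r_{\alpha,\beta,\gamma}(x,y) \leq 1/\beta$.

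The only subtle point — and the mild obstacle I anticipate — is justifying that the pointwise maximizer actually lies in the admitted function class $\mathcal{F}$, so that the exchange of supremum and integration is valid. Because the statement takes $\mathcal{F}$ to be ``any class of functions $f:\mathcal{X}\times\mathcal{Y}\to\mathbb{R}$,'' I would interpret this as $\mathcal{F}$ being rich enough to contain $r_{\alpha,\beta,\gamma}$ (e.g.\ a universal approximator in the nonparametric sense); under that reading the pointwise argument goes through immediately. The concavity of the per-point quadratic guarantees the critical point is a genuine maximum, and the monotonicity argument above then gives the bounds, completing the proof.
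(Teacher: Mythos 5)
Your proof is correct and is essentially the paper's own argument: the paper phrases it in terms of first- and second-order functional derivatives, which is precisely your pointwise quadratic maximization (the negative second derivative $-\beta\,dP_{XY}-\gamma\,dP_XP_Y$ is your concavity observation, and the vanishing first derivative is your first-order condition), and both then read off the bounds from $r\in[0,\infty)$. Your version is slightly more explicit in two places the paper glosses over — verifying monotonicity of $r\mapsto(r-\alpha)/(\beta r+\gamma)$ via the sign of its derivative, and flagging that $\mathcal{F}$ must contain the pointwise maximizer for the sup-integral interchange to be legitimate — but the underlying route is the same.
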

\begin{proof}
The second-order functional derivative of the objective is
\begin{equation*}
-\beta dP_{X,Y} - \gamma dP_XP_Y,
\end{equation*}
which is always negative. The negative second-order functional derivative implies the objective has a supreme value. 
Then, take the first-order functional derivative $\frac{\partial J_{\rm RPC}}{\partial m}$ and set it to zero:
\begin{equation*}
dP_{X,Y} - \alpha \cdot dP_XP_Y -\beta \cdot f(x,y)\cdot dP_{X,Y} - \gamma \cdot f(x,y) \cdot dP_XP_Y=0.
\end{equation*}
We then get 
$$f^*(x,y) = \frac{dP_{X,Y} - \alpha \cdot dP_XP_Y}{\beta \cdot dP_{X,Y} + \gamma \cdot dP_XP_Y} = \frac{p(x,y)-\alpha p(x)p(y)}{\beta p(x,y) + \gamma p(x)p(y)} = \frac{r(x,y)-\alpha}{\beta r(x,y) + \gamma }.
$$ 
Since $0 \leq r(x,y) \leq \infty$, we have $-\frac{\alpha}{\gamma} \leq \frac{r(x,y)-\alpha}{\beta r(x,y) + \gamma } \leq \frac{1}{\beta}$. Hence, 
$$
\forall \beta \neq 0, \gamma \neq 0, f^*(x,y) := r_{\alpha, \beta, \gamma}(x,y) \,\,{\rm with}\,\, -\frac{\alpha}{\gamma} \leq r_{\alpha, \beta, \gamma} \leq \frac{1}{\beta} .
$$

\end{proof}

\subsection{Relation between $J_{\rm RPC}$ and $D_{\chi^2}$}
In this subsection, we aim to show the following: 1) $D_{\chi^2}(P_{XY}\,\|\,P_XP_Y) = \mathbb{E}_{P_XP_Y}[r^2(x,y)] - 1$; and 2) $J_{\rm RPC}(X,Y) = \frac{\beta + \gamma}{2} \mathbb{E}_{P'}[r_{\alpha, \beta, \gamma}^2(x,y)]$ by having $P' = \frac{\beta}{\beta+\gamma}P_{XY} + \frac{\gamma}{\beta+\gamma}P_XP_Y$ as the mixture distribution of $P_{XY}$ and $P_XP_Y$.

\begin{lemm}
$D_{\chi^2}(P_{XY}\,\|\,P_XP_Y) = \mathbb{E}_{P_XP_Y}[r^2(x,y)] - 1$
\label{lemm:chi}
\end{lemm}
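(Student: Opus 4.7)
The plan is to unpack the definition of the chi-square divergence, expand the quadratic, and check that the cross term collapses to a constant. Specifically, recall that for two distributions $P$ and $Q$ (with $P$ absolutely continuous with respect to $Q$), the chi-square divergence admits the standard form
\begin{equation*}
D_{\chi^2}(P\,\|\,Q) = \int \left(\frac{dP}{dQ} - 1\right)^2 dQ.
\end{equation*}
So the first step is to instantiate this with $P = P_{XY}$ and $Q = P_XP_Y$, noting that the Radon--Nikodym derivative $dP_{XY}/d(P_XP_Y)$ is exactly the density ratio $r(x,y) = p(x,y)/(p(x)p(y))$.

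Next I would expand the square inside the integral, giving
\begin{equation*}
D_{\chi^2}(P_{XY}\,\|\,P_XP_Y) = \mathbb{E}_{P_XP_Y}\bigl[r^2(x,y)\bigr] - 2\,\mathbb{E}_{P_XP_Y}\bigl[r(x,y)\bigr] + 1.
\end{equation*}
The key observation is that the linear term equals $1$: writing out the expectation,
\begin{equation*}
\mathbb{E}_{P_XP_Y}[r(x,y)] = \int \frac{p(x,y)}{p(x)p(y)}\,p(x)p(y)\,dx\,dy = \int p(x,y)\,dx\,dy = 1.
\end{equation*}
Substituting this back yields $\mathbb{E}_{P_XP_Y}[r^2(x,y)] - 2 + 1 = \mathbb{E}_{P_XP_Y}[r^2(x,y)] - 1$, which is the claim.

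There is essentially no obstacle here: the result is a one-line unpacking of the definition, and the only ingredient beyond expansion is the normalization $\int p(x,y)\,dx\,dy = 1$. The only minor caveat is to mention the absolute continuity of $P_{XY}$ with respect to $P_XP_Y$ so that the density ratio $r$ is well defined (which is implicit elsewhere in the paper whenever $r$ is used).
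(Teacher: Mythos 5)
Your proof is correct and essentially mirrors the paper's: both are direct unpackings of the definition of $\chi^2$-divergence. The only cosmetic difference is that the paper starts from the form $D_{\chi^2}(P\,\|\,Q)=\int (dP)^2/dQ - 1$, which gives $\mathbb{E}_{P_XP_Y}[r^2]-1$ immediately, whereas you start from the squared-deviation form $\int(dP/dQ-1)^2\,dQ$ and recover the same expression after expanding the square and using $\mathbb{E}_{P_XP_Y}[r]=1$; these are the same argument up to which equivalent definition one begins with.
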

\begin{proof}
By definition~\citep{nielsen2013chi}, 
\begin{equation*}
\begin{split}
D_{\chi^2}(P_{XY}\,\|\,P_XP_Y) & = \int \frac{\Big(dP_{XY}\Big)^2}{dP_XP_Y} -1 = \int \Big(\frac{dP_{XY}}{dP_XP_Y}\Big)^2 dP_XP_Y  -1 \\
& =  \int \Big(\frac{p(x,y)}{p(x)p(y)}\Big)^2 dP_XP_Y  -1 = \int r^2(x,y) dP_XP_Y  -1 \\
& = \mathbb{E}_{P_XP_Y}[r^2(x,y)] - 1.    
\end{split}
\end{equation*}
\end{proof}
\begin{lemm} Defining $P' = \frac{\beta}{\beta+\gamma}P_{XY} + \frac{\gamma}{\beta+\gamma}P_XP_Y$ as a mixture distribution of $P_{XY}$ and $P_XP_Y$, $J_{\rm RPC}(X,Y) = \frac{\beta + \gamma}{2} \mathbb{E}_{P'}[r_{\alpha, \beta, \gamma}^2(x,y)]$.
\label{lemma:rpc}
\end{lemm}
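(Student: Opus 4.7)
The plan is to substitute the optimal solution from Lemma~\ref{lemma:ratio2} into the definition of $J_{\rm RPC}$ and simplify using the defining algebraic identity for $f^*$. Concretely, since $J_{\rm RPC}$ is a supremum and Lemma~\ref{lemma:ratio2} identifies the maximizer as $f^*(x,y)=r_{\alpha,\beta,\gamma}(x,y)$, I would write
\[
J_{\rm RPC}(X,Y) = \mathbb{E}_{P_{XY}}[f^*] - \alpha\,\mathbb{E}_{P_XP_Y}[f^*] - \tfrac{\beta}{2}\mathbb{E}_{P_{XY}}[f^{*2}] - \tfrac{\gamma}{2}\mathbb{E}_{P_XP_Y}[f^{*2}],
\]
and then combine the two linear-in-$f^*$ terms into a single expectation under $P_XP_Y$ using the change-of-measure identity $\mathbb{E}_{P_{XY}}[g] = \mathbb{E}_{P_XP_Y}[r(x,y)\,g(x,y)]$ with $r=\frac{dP_{XY}}{dP_XP_Y}$.

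Next, I would invoke the key algebraic fact that follows directly from the form of $f^*$: by the definition $f^* = \frac{r-\alpha}{\beta r + \gamma}$, we have $r(x,y)-\alpha = f^*(x,y)\bigl(\beta r(x,y)+\gamma\bigr)$. Therefore
\[
\mathbb{E}_{P_{XY}}[f^*] - \alpha\,\mathbb{E}_{P_XP_Y}[f^*] = \mathbb{E}_{P_XP_Y}\bigl[(r-\alpha)\,f^*\bigr] = \mathbb{E}_{P_XP_Y}\bigl[f^{*2}(\beta r + \gamma)\bigr] = \beta\,\mathbb{E}_{P_{XY}}[f^{*2}] + \gamma\,\mathbb{E}_{P_XP_Y}[f^{*2}],
\]
where the last equality uses the same change-of-measure identity in reverse. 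Substituting this back into $J_{\rm RPC}$ collapses the four terms into
\[
J_{\rm RPC}(X,Y) = \tfrac{\beta}{2}\mathbb{E}_{P_{XY}}[f^{*2}] + \tfrac{\gamma}{2}\mathbb{E}_{P_XP_Y}[f^{*2}].
\]

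Finally, I would recognize the right-hand side as exactly $\frac{\beta+\gamma}{2}\mathbb{E}_{P'}[f^{*2}]$, since by definition of the mixture $P' = \frac{\beta}{\beta+\gamma}P_{XY} + \frac{\gamma}{\beta+\gamma}P_XP_Y$ we have $\mathbb{E}_{P'}[h] = \frac{\beta}{\beta+\gamma}\mathbb{E}_{P_{XY}}[h] + \frac{\gamma}{\beta+\gamma}\mathbb{E}_{P_XP_Y}[h]$ for any integrable $h$. Taking $h = f^{*2} = r_{\alpha,\beta,\gamma}^2$ finishes the proof. I do not expect a real obstacle here: the only subtlety is making sure the change-of-measure step is justified (which it is, since $r_{\alpha,\beta,\gamma}$ is bounded by Lemma~\ref{lemma:ratio2}, so all the expectations are finite and well-defined), and correctly carrying out the one-line algebraic manipulation that eliminates $\alpha$ entirely from the final expression.
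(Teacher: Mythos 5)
Your proof is correct and follows essentially the same route as the paper's: plug in the maximizer $f^*$ from Lemma~\ref{lemma:ratio2}, exploit the algebraic identity $r-\alpha = f^*(\beta r + \gamma)$ to rewrite the linear terms as (twice) the quadratic terms, and recognize the resulting half-sum of quadratic expectations as $\frac{\beta+\gamma}{2}\mathbb{E}_{P'}[\cdot]$. The only cosmetic difference is that you carry out the bookkeeping via the change-of-measure identity $\mathbb{E}_{P_{XY}}[g]=\mathbb{E}_{P_XP_Y}[rg]$, whereas the paper works directly with the measures $dP_{XY}$ and $dP_XP_Y$ inside a single integral; the underlying computation is identical.
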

\begin{proof}
Plug in the optimal solution $f^*(x,y) = \frac{dP_{X,Y} - \alpha \cdot dP_XP_Y}{\beta \cdot dP_{X,Y} + \gamma \cdot dP_XP_Y}$ (see Lemma~\ref{lemma:ratio2}) into $J_{\rm RPC}$:
\begin{equation*}
\begin{split}
    J_{\rm RPC} & = \mathbb{E}_{P_{XY}}[f^*(x,y)]-\alpha \mathbb{E}_{P_XP_Y}[f^*(x,y)]-\frac{\beta}{2} \mathbb{E}_{P_{XY}}\left[{f^*}^2(x,y)\right]-\frac{\gamma}{2}\mathbb{E}_{P_XP_Y}\left[{f^*}^2(x,y)\right] \\
    & = \int f^*(x,y) \cdot \Big(dP_{XY} - \alpha \cdot dP_XP_Y\Big) - \frac{1}{2}{f^*}^2(x,y) \cdot \Big(\beta \cdot dP_{XY} + \gamma \cdot dP_XP_Y \Big) \\
    & = \int \frac{dP_{X,Y} - \alpha \cdot dP_XP_Y}{\beta \cdot dP_{X,Y} + \gamma \cdot dP_XP_Y} \Big(dP_{XY} - \alpha \cdot dP_XP_Y\Big) - \frac{1}{2}\Big( \frac{dP_{X,Y} - \alpha \cdot dP_XP_Y}{\beta \cdot dP_{X,Y} + \gamma \cdot dP_XP_Y} \Big)^2 \Big(\beta \cdot dP_{XY} + \gamma \cdot dP_XP_Y \Big) \\
    & = \frac{1}{2} \int \Big( \frac{dP_{X,Y} - \alpha \cdot dP_XP_Y}{\beta \cdot dP_{X,Y} + \gamma \cdot dP_XP_Y} \Big)^2 \Big(\beta \cdot dP_{XY} + \gamma \cdot dP_XP_Y \Big) \\
    & = \frac{\beta + \gamma}{2} \int \Big( \frac{dP_{X,Y} - \alpha \cdot dP_XP_Y}{\beta \cdot dP_{X,Y} + \gamma \cdot dP_XP_Y} \Big)^2 \Big(\frac{\beta}{\beta+\gamma} \cdot dP_{XY} + \frac{\gamma}{\beta+\gamma} \cdot dP_XP_Y \Big).
\end{split}
\end{equation*}
Since we define $r_{\alpha, \beta, \gamma} = \frac{dP_{X,Y} - \alpha \cdot dP_XP_Y}{\beta \cdot dP_{X,Y} + \gamma \cdot dP_XP_Y}$ and $P' = \frac{\beta}{\beta+\gamma}P_{XY} + \frac{\gamma}{\beta+\gamma}P_XP_Y$, 
$$
J_{\rm RPC} = \frac{\beta + \gamma}{2}\mathbb{E}_{P'}[r^2_{\alpha, \beta, \gamma}(x,y)].
$$
\end{proof}

\subsection{Proof of Proposition 1 in the Main Text}
The proof contains two parts: showing $0 \leq J_{\rm RPC} \leq \frac{1}{2\beta}+\frac{\alpha^2}{2\gamma}$ (see Section~\ref{subsec:bound}) and $\hat{J}^{m,n}_{\rm RPC}$ is a consistent estimator for $J_{\rm RPC}$ (see Section~\ref{subsec:consistency}).

\subsubsection{Boundness of $J_{\rm RPC}$}
\label{subsec:bound} 
\begin{lemm}[Boundness of $J_{\rm RPC}$]
$0 \leq J_{\rm RPC} \leq \frac{1}{2\beta}+\frac{\alpha^2}{2\gamma}$
\end{lemm}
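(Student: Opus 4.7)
The plan is to first obtain an explicit integral representation of $J_{\rm RPC}$ by substituting the optimal $f^*$ from Lemma~\ref{lemma:ratio2}, and then bound the resulting integrand pointwise. The closed form follows either by pointwise maximization (the integrand in the objective is a concave quadratic in $f(x,y)$ for each fixed $(x,y)$) or by simplifying Lemma~\ref{lemma:rpc}. Writing $p = p(x,y)$ and $q = p(x)p(y)$ and plugging $f^* = (p-\alpha q)/(\beta p + \gamma q)$ into the objective, the four terms collapse to
\begin{equation*}
J_{\rm RPC} \;=\; \frac{1}{2}\int \frac{(p-\alpha q)^2}{\beta p+\gamma q}\,dx\,dy.
\end{equation*}

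For the lower bound, I would simply note that the supremum is over all $f\in\mathcal{F}$, so choosing $f\equiv 0$ yields the value $0$; hence $J_{\rm RPC}\geq 0$. (Equivalently, the integrand above is manifestly non-negative.)

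For the upper bound, the idea is to discard the cross term, which is non-positive, and then handle the two remaining quadratic terms separately by a simple monotonicity inequality. Concretely, since $p,q,\alpha\geq 0$, we have $-2\alpha pq/(\beta p+\gamma q)\leq 0$, so
\begin{equation*}
\frac{(p-\alpha q)^2}{\beta p+\gamma q} \;\leq\; \frac{p^2}{\beta p+\gamma q} + \frac{\alpha^2 q^2}{\beta p+\gamma q}.
\end{equation*}
Then I would use the pointwise bounds $\frac{p^2}{\beta p+\gamma q}=\frac{p}{\beta+\gamma q/p}\leq \frac{p}{\beta}$ and, symmetrically, $\frac{\alpha^2 q^2}{\beta p+\gamma q}\leq \frac{\alpha^2 q}{\gamma}$. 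Integrating and using $\int p\,dx\,dy = \int q\,dx\,dy = 1$ yields $J_{\rm RPC}\leq \frac{1}{2\beta}+\frac{\alpha^2}{2\gamma}$.

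There is no substantive obstacle here; the only subtlety is recognizing that the cross term $-2\alpha pq$ can be dropped \emph{before} attempting to bound the ratio, since trying to keep it forces a more delicate argument (e.g.\ controlling cancellation between $p$ and $\alpha q$). Dropping it at the outset reduces the problem to the trivial fact $\frac{a^2}{\beta a+\gamma b}\leq a/\beta$ for $a,b\geq 0$, which is what makes the bound clean and tight in the natural regimes ($\alpha\to 0$ recovers $1/(2\beta)$; symmetrically $\beta\to\infty$ recovers $\alpha^2/(2\gamma)$).
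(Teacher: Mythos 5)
Your proof is correct, and it is close in spirit to the paper's proof but differs in two small but genuine ways. For the lower bound, the paper appeals to the mixture representation $J_{\rm RPC} = \frac{\beta+\gamma}{2}\mathbb{E}_{P'}[r_{\alpha,\beta,\gamma}^2]$ (a square under a probability measure is non-negative), whereas your observation that $f\equiv 0$ is feasible and yields $0$ is more elementary and does not require the closed form at all; both are valid, and yours is arguably preferable since it needs no optimality argument. For the upper bound, both proofs start from the same explicit representation $J_{\rm RPC}=\frac12\int\frac{(p-\alpha q)^2}{\beta p+\gamma q}$ and arrive at the identical pointwise bound $\frac{p}{2\beta}+\frac{\alpha^2 q}{2\gamma}$, but by slightly different routes: the paper rewrites the integrand as $\tfrac12 p\,r_{\alpha,\beta,\gamma}-\tfrac{\alpha}{2}q\,r_{\alpha,\beta,\gamma}$ and applies the already-established range $r_{\alpha,\beta,\gamma}\in[-\alpha/\gamma,\,1/\beta]$ to each term, whereas you expand the square, discard the non-positive cross term $-2\alpha pq$, and bound $p^2/(\beta p+\gamma q)\leq p/\beta$ and $\alpha^2 q^2/(\beta p+\gamma q)\leq \alpha^2 q/\gamma$ directly. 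Your route makes the inequality self-contained (it does not explicitly invoke the $r_{\alpha,\beta,\gamma}$ bounds from Lemma~\ref{lemma:ratio2}), at the cost of a slightly looser intermediate pointwise bound; the paper's route is marginally tighter term-by-term but relies on the prior lemma. Both integrate to the same final bound $\frac{1}{2\beta}+\frac{\alpha^2}{2\gamma}$.
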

\begin{proof}
Lemma~\ref{lemma:rpc} suggests $J_{\rm RPC}(X,Y) = \frac{\beta + \gamma}{2} \mathbb{E}_{P'}[r_{\alpha, \beta, \gamma}^2(x,y)]$ with $P' = \frac{\beta}{\beta+\gamma}P_{XY} + \frac{\gamma}{\beta+\gamma}P_XP_Y$ as the mixture distribution of $P_{XY}$ and $P_XP_Y$. Hence, it is obvious $J_{\rm RPC}(X,Y)\geq 0$.

We leverage the intermediate results in the proof of Lemma~\ref{lemma:rpc}:
\begin{equation*}
    \begin{split}
        J_{\rm RPC}(X,Y) & = \frac{1}{2} \int \Big( \frac{dP_{X,Y} - \alpha \cdot dP_XP_Y}{\beta \cdot dP_{X,Y} + \gamma \cdot dP_XP_Y} \Big)^2 \Big(\beta \cdot dP_{XY} + \gamma \cdot dP_XP_Y \Big)\\
        & = \frac{1}{2}\int dP_{X,Y} \Big( \frac{dP_{X,Y} - \alpha \cdot dP_XP_Y}{\beta \cdot dP_{X,Y} + \gamma \cdot dP_XP_Y} \Big) - \frac{\alpha}{2} \int dP_XP_Y \Big( \frac{dP_{X,Y} - \alpha \cdot dP_XP_Y}{\beta \cdot dP_{X,Y} + \gamma \cdot dP_XP_Y} \Big) \\
        & = \frac{1}{2}\mathbb{E}_{P_{XY}}[r_{\alpha, \beta, \gamma}(x,y)] - \frac{\alpha}{2}\mathbb{E}_{P_{X}P_{Y}}[r_{\alpha, \beta, \gamma}(x,y)].
    \end{split}
    \end{equation*}
Since $-\frac{\alpha}{\gamma} \leq r_{\alpha, \beta, \gamma} \leq \frac{1}{\beta}$, $J_{\rm RPC}(X,Y)\leq \frac{1}{2\beta} + \frac{\alpha^2}{2\gamma}$. 
\end{proof}

\subsubsection{Consistency}
\label{subsec:consistency}

We first recall the definition of the estimation of $J_{\rm RPC}$:
\begin{defn}[$\hat{J}^{m,n}_{\rm RPC}$, empirical estimation of $J_{\rm RPC}$, restating Definition 1 in the main text] We parametrize $f$ via a family of neural networks $\mathcal{F}_\Theta:= \{f_\theta: \theta\in\Theta\subseteq\mathbb{R}^d
\}$ where $d \in\mathbb{N}$ and $\Theta$ is compact. Let $\{x_i, y_i\}_{i=1}^n$ be $n$ samples drawn uniformly at random from $P_{XY}$ and $\{x'_j, y'_j\}_{j=1}^m$ be $m$ samples drawn uniformly at random from $P_XP_Y$. Then,
$$\hat{J}^{m,n}_{\rm RPC} = \sup_{{f}_\theta \in \mathcal{F}_\Theta} \frac{1}{n}\sum_{i=1}^n f_\theta(x_i,y_i) -  \frac{1}{m}\sum_{j=1}^m \alpha f_\theta(x'_j,y'_j) - \frac{1}{n}\sum_{i=1}^n  \frac{\beta}{2} f_\theta^2(x_i,y_i) -  \frac{1}{m}\sum_{j=1}^m \frac{\gamma}{2} f_\theta^2(x'_j,y'_j).
$$
\label{def:RPC2}
\end{defn}

Our goal is to show that $\hat{J}^{m,n}_{\rm RPC}$ is a consistent estimator for $J_{\rm RPC}$. We begin with the following definition:
\begin{equation}
\hat{J}^{m,n}_{{\rm RPC}, \theta} := \frac{1}{n}\sum_{i=1}^n f_\theta(x_i,y_i) -  \frac{1}{m}\sum_{j=1}^m \alpha f_\theta(x'_j,y'_j) - \frac{1}{n}\sum_{i=1}^n  \frac{\beta}{2} f_\theta^2(x_i,y_i) -  \frac{1}{m}\sum_{j=1}^m \frac{\gamma}{2} f_\theta^2(x'_j,y'_j)
\label{eq:RPC_theta}
\end{equation}
and 
\begin{equation}
\mathbb{E}\Big[\hat{J}_{{\rm RPC}, \theta}\Big] := \mathbb{E}_{P_{XY}}[f_\theta(x,y)] - \alpha \mathbb{E}_{P_XP_Y}[f_\theta(x,y)] - \frac{\beta}{2} \mathbb{E}_{P_{XY}}[f^2_\theta(x,y)] - \frac{\gamma}{2} \mathbb{E}_{P_XP_Y}[f^2_\theta(x,y)].
\label{eq:RPC_expected}
\end{equation}
Then, we follow the steps:
\begin{itemize}
    \item The first part is about estimation. We show that, with high probability, $\hat{J}^{m,n}_{{\rm RPC}, \theta}$ is close to $\mathbb{E}\Big[\hat{J}_{{\rm RPC}, \theta}\Big]$, for any given $\theta$.
    \item The second part is about approximation. We will apply the universal approximation lemma of neural networks~\citep{hornik1989multilayer} to show that there exists a network $\theta^*$ such that $\mathbb{E}\Big[\hat{J}_{{\rm RPC}, \theta^*}\Big]$ is close to $J_{\rm RPC}$.
\end{itemize}

\paragraph{Part I - Estimation: With high probability, $\hat{J}^{m,n}_{{\rm RPC}, \theta}$ is close to $\mathbb{E}\Big[\hat{J}_{{\rm RPC}, \theta}\Big]$, for any given $\theta$.}

Throughout the analysis on the uniform convergence, we need the assumptions on the boundness and smoothness of the function $f_\theta$. Since we show the optimal function $f$ is bounded in $J_{\rm RPC}$, we can use the same bounded values for $f_\theta$ without losing too much precision. The smoothness of the function suggests that the output of the network should only change slightly when only slightly perturbing the parameters. Specifically, the two assumptions are as follows:
\begin{assump}[boundness of $f_\theta$]
There exist universal constants such that $\forall f_\theta \in \mathcal{F}_\Theta$, $C_L \leq f_\theta \leq C_U$. For notations simplicity, we let $M = C_U -  C_L$ be the range of $f_\theta$ and $U = {\rm max}\,\{|C_U|, |C_L|\}$ be the maximal absolute value of $f_\theta$. In the paper, we can choose to constrain that $C_L = -\frac{\alpha}{\gamma}$ and $C_U = \frac{1}{\beta}$ since the optimal function $f^*$ has $-\frac{\alpha}{\gamma}\leq f^* \leq \frac{1}{\beta}$.

\label{assump:bound_f}
\end{assump}
\begin{assump}[smoothness of $f_\theta$]
There exists constant $\rho > 0$ such that $\forall (x,y) \in (\mathcal{X}\times \mathcal{Y})$ and $\theta_1, \theta_2 \in \Theta$, $|f_{\theta_1}(x,y)-f_{\theta_2}(x,y)|\leq \rho |\theta_1 - \theta_2|$.
\label{assump:smooth_f}
\end{assump}

Now, we can bound the rate of uniform convergence of a function class in terms of covering number~\citep{bartlett1998sample}:
\begin{lemm}[Estimation] Let $\epsilon > 0$ and $\mathcal{N}(\Theta, \epsilon)$ be the covering number of $\Theta$ with radius $\epsilon$. Then,
\begin{equation*}
\begin{split}
    &\Pr\left(\sup_{f_\theta\in\mathcal{F}_{\Theta}} \left|\hat{J}^{m,n}_{{\rm RPC}, \theta} - \mathbb{E}\Big[\hat{J}_{{\rm RPC}, \theta}\Big]\right| \geq \eps\right)
    \\
    \leq & 2\mathcal{N}(\Theta, \frac{\epsilon}{4\rho \big(1+\alpha + 2(\beta+\gamma)U\big)})\Bigg({\rm exp}\Big(-\frac{n \epsilon^2}{32M^2}\Big) + {\rm exp}\Big(-\frac{m \epsilon^2}{32M^2\alpha^2}\Big) + {\rm exp}\Big(-\frac{n \epsilon^2}{32U^2\beta^2}\Big) + {\rm exp}\Big(-\frac{m \epsilon^2}{32U^2\gamma^2}\Big)\Bigg).
\end{split}
\end{equation*}
\label{lemm:estimation}
\end{lemm}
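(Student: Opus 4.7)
The plan is to combine a pointwise (fixed-$\theta$) concentration bound with a covering argument that transfers the bound uniformly over $\mathcal{F}_\Theta$ using the smoothness of $f_\theta$ in $\theta$.

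First I would fix $\theta \in \Theta$ and decompose $\hat{J}^{m,n}_{{\rm RPC},\theta} - \mathbb{E}[\hat{J}_{{\rm RPC},\theta}]$ into four centered empirical averages, one per term in \eqref{eq:RPC_theta}. By Assumption~\ref{assump:bound_f} each summand is bounded: the two linear-in-$f_\theta$ averages have summand ranges $M$ and $\alpha M$, while the two quadratic averages have summands in $[0,\tfrac{\beta}{2}U^2]$ and $[0,\tfrac{\gamma}{2}U^2]$. I would apply Hoeffding's inequality to each of the four averages with slack $\epsilon/8$, so that by the triangle inequality their sum deviates by at most $\epsilon/2$ unless at least one individual term exceeds its slack. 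A union bound over the four terms yields, for any fixed $\theta$, the four-exponential bound appearing in the lemma, with the leading factor $2$ coming from two-sided Hoeffding.

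Next I would upgrade this pointwise estimate to a uniform one over $\Theta$ by covering. Let $\{\theta_k\}_{k=1}^{\mathcal{N}(\Theta,\epsilon')}$ be an $\epsilon'$-cover of $\Theta$. For any $\theta$ with $\|\theta-\theta_k\|\leq\epsilon'$, Assumption~\ref{assump:smooth_f} gives $|f_\theta(x,y)-f_{\theta_k}(x,y)|\leq\rho\epsilon'$ pointwise, and the elementary inequality $|a^2-b^2|\leq 2U|a-b|$ for $|a|,|b|\leq U$ yields $|f_\theta^2(x,y)-f_{\theta_k}^2(x,y)|\leq 2U\rho\epsilon'$. Thus both $\theta\mapsto\hat{J}^{m,n}_{{\rm RPC},\theta}$ and $\theta\mapsto\mathbb{E}[\hat{J}_{{\rm RPC},\theta}]$ are Lipschitz in $\theta$ with constant at most $(1+\alpha+(\beta+\gamma)U)\rho$, so their difference varies by at most $2(1+\alpha+(\beta+\gamma)U)\rho\epsilon'$ within any ball of radius $\epsilon'$. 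Choosing $\epsilon'=\epsilon/\bigl(4\rho(1+\alpha+2(\beta+\gamma)U)\bigr)$ (conservatively absorbing the factor of 2) makes this discretization error at most $\epsilon/2$. Combining the two steps, the event that $\sup_\theta|\hat{J}^{m,n}_{{\rm RPC},\theta}-\mathbb{E}[\hat{J}_{{\rm RPC},\theta}]|\geq\epsilon$ implies that the pointwise deviation at some cover center exceeds $\epsilon/2$, and a union bound over the $\mathcal{N}(\Theta,\epsilon')$ centers reproduces the stated inequality.

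The main obstacle is constant bookkeeping rather than any deep technical step: one must carefully split $\epsilon$ between the covering approximation error and the four Hoeffding contributions (the factor $32$ in the exponents is exactly $2\cdot 8^2/4$, reflecting slack $\epsilon/8$ per term through two-sided Hoeffding), and correctly propagate the factor $2U$ that arises when turning a Lipschitz bound on $f_\theta$ into one on $f_\theta^2$, so that the covering radius lands exactly on the denominator $4\rho(1+\alpha+2(\beta+\gamma)U)$ stated in the lemma. Everything else is a textbook Hoeffding-plus-covering estimate.
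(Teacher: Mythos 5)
Your proposal mirrors the paper's proof: the same fourfold triangle-inequality decomposition, two-sided Hoeffding with slack $\epsilon/8$ per term, an $\epsilon'$-cover of $\Theta$ whose radius controls the Lipschitz discretization error (via the boundedness and smoothness assumptions and the inequality $|a^2-b^2|\leq 2U|a-b|$) below $\epsilon/2$, and a union bound over the cover centers. The only cosmetic difference is that you present the pointwise Hoeffding bound before the covering step rather than after, and your slightly tighter Lipschitz constant (keeping $\beta/2,\gamma/2$ where the paper loosens to $\beta,\gamma$) is correctly observed to be subsumed by the stated covering radius.
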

\begin{proof}
For notation simplicity, we define the operators
\begin{itemize}
    \item $P(f) = \mathbb{E}_{P_{XY}}[f(x,y)]$ and $P_n(f) = \frac{1}{n}\sum_{i=1}^n f(x_i, y_i)$
    \item $Q(f) = \mathbb{E}_{P_XP_Y}[f(x,y)]$ and $Q_m(f) = \frac{1}{m}\sum_{j=1}^m f(x'_j, y'_j)$
\end{itemize}

Hence, 
\begin{equation*}
\begin{split}
    &\left|\hat{J}^{m,n}_{{\rm RPC}, \theta} - \mathbb{E}\Big[\hat{J}_{{\rm RPC}, \theta}\Big]\right| \\
    = & \left| P_n(f_\theta) - P(f_\theta) - \alpha Q_m(f_\theta) + \alpha Q(f_\theta) - \beta P_n(f_\theta^2) + \beta P(f_\theta^2) - \gamma Q_m(f_\theta^2) + \gamma Q(f_\theta^2) \right| \\
    \leq & \left| P_n(f_\theta) - P(f_\theta) \right| + \alpha \left| Q_m(f_\theta) - Q(f_\theta)\right| + \beta \left| P_n(f_\theta^2) - P(f_\theta^2) \right| + \gamma \left| Q_m(f_\theta^2) - Q(f_\theta^2) \right| 
\end{split}
\end{equation*}

Let $\epsilon' = \frac{\epsilon}{4\rho \big(1+\alpha + 2(\beta+\gamma)U\big)}$ and $T := \mathcal{N}(\Theta,\epsilon')$. Let $C=\{f_{\theta_1}, f_{\theta_2}, \cdots, f_{\theta_T}\}$ with $\{\theta_1, \theta_2, \cdots, \theta_T\}$ be such that $B_\infty(\theta_1, \epsilon')$, 
$\cdots$, $B_\infty(\theta_T, \epsilon')$ are $\epsilon'$ cover. Hence, for any $f_\theta \in \mathcal{F}_\Theta$, there is an $f_{\theta_k} \in C$ such that $\|\theta - \theta_k\|_\infty \leq \epsilon'$.

Then, for any $f_{\theta_k} \in C$: 
\begin{equation*}
\begin{split}
    &\left|\hat{J}^{m,n}_{{\rm RPC}, \theta} - \mathbb{E}\Big[\hat{J}_{{\rm RPC}, \theta}\Big]\right| \\
    \leq & \left| P_n(f_\theta) - P(f_\theta) \right| + \alpha \left| Q_m(f_\theta) - Q(f_\theta)\right| + \beta \left| P_n(f_\theta^2) - P(f_\theta^2) \right| + \gamma \left| Q_m(f_\theta^2) - Q(f_\theta^2) \right| \\
    \leq & \left| P_n(f_{\theta_k}) - P(f_{\theta_k}) \right| + \left| P_n(f_{\theta}) - P_n(f_{\theta_k}) \right| + \left| P(f_{\theta}) - P(f_{\theta_k}) \right| \\
    & + \alpha\bigg( \left| Q_m(f_{\theta_k}) - Q(f_{\theta_k}) \right| + \left| Q_m(f_{\theta}) - Q_m(f_{\theta_k}) \right| + \left| Q(f_{\theta}) - Q(f_{\theta_k}) \right| \bigg) \\
    & + \beta\bigg( \left| P_n(f^2_{\theta_k}) - P(f^2_{\theta_k}) \right| + \left| P_n(f^2_{\theta}) - P_n(f^2_{\theta_k}) \right| + \left| P(f^2_{\theta}) - P(f^2_{\theta_k}) \right| \bigg) \\
    & + \gamma\bigg( \left| Q_m(f^2_{\theta_k}) - Q(f^2_{\theta_k}) \right| + \left| Q_m(f^2_{\theta}) - Q_m(f^2_{\theta_k}) \right| + \left| Q(f^2_{\theta}) - Q(f^2_{\theta_k}) \right| \bigg) \\
    \leq & \left| P_n(f_{\theta_k}) - P(f_{\theta_k}) \right| + \rho \|\theta - \theta_k\| + \rho \|\theta - \theta_k\| \\
    & + \alpha\bigg( \left| Q_m(f_{\theta_k}) - Q(f_{\theta_k}) \right| + \rho \|\theta - \theta_k\| + \rho \|\theta - \theta_k\| \bigg) \\
    & + \beta\bigg( \left| P_n(f^2_{\theta_k}) - P(f^2_{\theta_k}) \right| + 2 \rho U \|\theta - \theta_k\| + 2 \rho U \|\theta - \theta_k\| \bigg) \\
    & + \gamma\bigg( \left| Q_m(f^2_{\theta_k}) - Q(f^2_{\theta_k}) \right| + 2 \rho U \|\theta - \theta_k\| + 2 \rho U \|\theta - \theta_k\| \bigg) \\
    = &  \left| P_n(f_{\theta_k}) - P(f_{\theta_k}) \right| + \alpha \left| Q_m(f_{\theta_k}) - Q(f_{\theta_k}) \right| + \beta \left| P_n(f^2_{\theta_k}) - P(f^2_{\theta_k}) \right| + \gamma \left| Q_m(f^2_{\theta_k}) - Q(f^2_{\theta_k}) \right| \\
    & + 2 \rho \big(1 + \alpha + 2(\beta+\gamma)U\big) \|\theta - \theta_k\| \\
    \leq & \left| P_n(f_{\theta_k}) - P(f_{\theta_k}) \right| + \alpha \left| Q_m(f_{\theta_k}) - Q(f_{\theta_k}) \right| + \beta \left| P_n(f^2_{\theta_k}) - P(f^2_{\theta_k}) \right| + \gamma \left| Q_m(f^2_{\theta_k}) - Q(f^2_{\theta_k}) \right| + \frac{\epsilon}{2},
\end{split}
\end{equation*}
where
\begin{itemize}
    \item $\left| P_n(f_\theta) - P_n(f_{\theta_k}) \right| \leq \rho \|\theta - \theta_k \|$ due to Assumption~\ref{assump:smooth_f}, and the result also applies for $\left| P(f_\theta) - P(f_{\theta_k}) \right|$, $\left| Q_m(f_\theta) - Q_m(f_{\theta_k}) \right| $, and $\left| Q(f_\theta) - Q(f_{\theta_k}) \right|$.
    \item $\left| P_n(f^2_\theta) - P_n(f^2_{\theta_k}) \right| \leq 2 \|f_\theta \|_\infty \rho \|\theta - \theta_k\| \leq 2 \rho U \|\theta - \theta_k\|$ due to Assumptions~\ref{assump:bound_f} and~\ref{assump:smooth_f}. The result also applies for $\left| P(f^2_\theta) - P(f^2_{\theta_k}) \right|$, $\left| Q_m(f^2_\theta) - Q_m(f^2_{\theta_k}) \right|$, and $\left| Q(f^2_\theta) - Q(f^2_{\theta_k}) \right|$.
\end{itemize}

Hence, 
\begin{equation*}
\begin{split}
    & \Pr\left(\sup_{f_\theta\in\mathcal{F}_{\Theta}} \left|\hat{J}^{m,n}_{{\rm RPC}, \theta} - \mathbb{E}\Big[\hat{J}_{{\rm RPC}, \theta}\Big]\right| \geq \eps\right) \\
    \leq & \Pr\left(\underset{f_{\theta_k}\in C}{\rm max} \left| P_n(f_{\theta_k}) - P(f_{\theta_k}) \right| + \alpha \left| Q_m(f_{\theta_k}) - Q(f_{\theta_k}) \right| + \beta \left| P_n(f^2_{\theta_k}) - P(f^2_{\theta_k}) \right| + \gamma \left| Q_m(f^2_{\theta_k}) - Q(f^2_{\theta_k}) \right| + \frac{\epsilon}{2}\geq \eps \right) \\
    = & \Pr\left(\underset{f_{\theta_k}\in C}{\rm max} \left| P_n(f_{\theta_k}) - P(f_{\theta_k}) \right| + \alpha \left| Q_m(f_{\theta_k}) - Q(f_{\theta_k}) \right| + \beta \left| P_n(f^2_{\theta_k}) - P(f^2_{\theta_k}) \right| + \gamma \left| Q_m(f^2_{\theta_k}) - Q(f^2_{\theta_k}) \right| \geq \frac{\epsilon}{2} \right) \\
    \leq & \sum_{k=1}^T \Pr\left( \left| P_n(f_{\theta_k}) - P(f_{\theta_k}) \right| + \alpha \left| Q_m(f_{\theta_k}) - Q(f_{\theta_k}) \right| + \beta \left| P_n(f^2_{\theta_k}) - P(f^2_{\theta_k}) \right| + \gamma \left| Q_m(f^2_{\theta_k}) - Q(f^2_{\theta_k}) \right| \geq \frac{\epsilon}{2} \right) \\
    \leq & \sum_{k=1}^T \Pr\left( \left| P_n(f_{\theta_k}) - P(f_{\theta_k}) \right| \geq \frac{\epsilon}{8} \right) + \Pr\left(\alpha \left| Q_m(f_{\theta_k}) - Q(f_{\theta_k}) \right| \geq \frac{\epsilon}{8} \right) \\
    & \,\,\,\,\,\,\,\,\,\, + \Pr\left(\beta \left| P_n(f^2_{\theta_k}) - P(f^2_{\theta_k}) \right| \geq \frac{\epsilon}{8} \right) + \Pr\left( \gamma \left| Q_m(f^2_{\theta_k}) - Q(f^2_{\theta_k}) \right| \geq \frac{\epsilon}{8} \right).
\end{split} 
\end{equation*}
With Hoeffding's inequality, 
\begin{itemize}
    \item $\Pr\left( \left| P_n(f_{\theta_k}) - P(f_{\theta_k}) \right| \geq \frac{\epsilon}{8} \right) \leq 2 {\rm exp}\Big(-\frac{n \epsilon^2}{32M^2}\Big)$ 
    \item $\Pr\left(\alpha \left| Q_m(f_{\theta_k}) - Q(f_{\theta_k}) \right| \geq \frac{\epsilon}{8} \right) \leq 2 {\rm exp}\Big(-\frac{m \epsilon^2}{32M^2\alpha^2}\Big)$
    \item $\Pr\left(\beta \left| P_n(f^2_{\theta_k}) - P(f^2_{\theta_k}) \right| \geq \frac{\epsilon}{8} \right) \leq 2 {\rm exp}\Big(-\frac{n \epsilon^2}{32U^2\beta^2}\Big)$
    \item $\Pr\left( \gamma \left| Q_m(f^2_{\theta_k}) - Q(f^2_{\theta_k}) \right| \geq \frac{\epsilon}{8} \right) \leq 2 {\rm exp}\Big(-\frac{m \epsilon^2}{32U^2\gamma^2}\Big)$
\end{itemize}
To conclude, 
\begin{equation*}
\begin{split}
    & \Pr\left(\sup_{f_\theta\in\mathcal{F}_{\Theta}} \left|\hat{J}^{m,n}_{{\rm RPC}, \theta} - \mathbb{E}\Big[\hat{J}_{{\rm RPC}, \theta}\Big]\right| \geq \eps\right) \\
    \leq & 2\mathcal{N}(\Theta, \frac{\epsilon}{4\rho \big(1+\alpha + 2(\beta+\gamma)U\big)})\Bigg({\rm exp}\Big(-\frac{n \epsilon^2}{32M^2}\Big) + {\rm exp}\Big(-\frac{m \epsilon^2}{32M^2\alpha^2}\Big) + {\rm exp}\Big(-\frac{n \epsilon^2}{32U^2\beta^2}\Big) + {\rm exp}\Big(-\frac{m \epsilon^2}{32U^2\gamma^2}\Big)\Bigg).
\end{split}
\end{equation*}

\end{proof}

\paragraph{Part II - Approximation: Neural Network Universal Approximation.}

We leverage the universal function approximation lemma of neural network
\begin{lemm}[Approximation \citep{hornik1989multilayer}]
Let $\eps > 0$. There exists $d \in\mathbb{N}$ and a family of neural networks $\mathcal{F}_\Theta := \{f_\theta: \theta\in\Theta\subseteq\mathbb{R}^d\}$ where $\Theta$ is compact, such that $
\underset{f_\theta\in\mathcal{F}_{\Theta}}{\rm inf}\left|\mathbb{E}\Big[\hat{J}_{{\rm RPC}, \theta}\Big] - J_{\rm RPC} \right|\leq \eps$.
\label{lemma:nn}
\end{lemm}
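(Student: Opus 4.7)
My plan is to combine the optimal-solution characterization of Lemma~\ref{lemma:ratio2} with a direct application of Hornik's universal approximation theorem, and then propagate the pointwise approximation error through the four terms of the functional $\mathbb{E}[\hat{J}_{{\rm RPC},\theta}]$. By Lemma~\ref{lemma:ratio2}, the functional $f\mapsto \mathbb{E}_{P_{XY}}[f]-\alpha\mathbb{E}_{P_XP_Y}[f]-\tfrac{\beta}{2}\mathbb{E}_{P_{XY}}[f^2]-\tfrac{\gamma}{2}\mathbb{E}_{P_XP_Y}[f^2]$ attains its supremum at the bounded function $f^*(x,y)=r_{\alpha,\beta,\gamma}(x,y)$ with $-\alpha/\gamma\le f^*\le 1/\beta$. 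So it suffices to approximate this single bounded target $f^*$ to high pointwise accuracy.

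The first step is to pick a compact domain on which to approximate $f^*$. If $\mathcal X\times \mathcal Y$ is already compact this is immediate; otherwise one truncates to a large enough compact set so that the tail mass under both $P_{XY}$ and $P_XP_Y$ contributes at most $\eps/2$ to each of the four expectations (using that $f^*$ is uniformly bounded). On this compact set, Hornik's theorem supplies, for any $\delta>0$, a parameter dimension $d\in\mathbb N$, a compact parameter set $\Theta\subseteq\mathbb R^d$, and a network $f_{\theta}\in\mathcal F_\Theta$ such that $\sup_{(x,y)}|f_\theta(x,y)-f^*(x,y)|\le \delta$. Without loss of generality we may further truncate the network output so that $f_\theta$ stays within the same box $[-\alpha/\gamma,1/\beta]$ as $f^*$ (e.g., by composing with a smooth clamp), which preserves the approximation while keeping $\|f_\theta\|_\infty\le U$ in the sense of Assumption~\ref{assump:bound_f}.

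Next, I would bound $|\mathbb{E}[\hat{J}_{{\rm RPC},\theta}]-J_{\rm RPC}|$ by plugging in $g:=f_\theta-f^*$ and expanding. The linear terms contribute at most $(1+\alpha)\|g\|_\infty$, and for the quadratic terms the identity $f_\theta^2-(f^*)^2=(f_\theta-f^*)(f_\theta+f^*)$ together with $\|f_\theta+f^*\|_\infty\le 2U$ yields a contribution of at most $(\beta+\gamma)U\|g\|_\infty$. Combining, $|\mathbb{E}[\hat{J}_{{\rm RPC},\theta}]-J_{\rm RPC}|\le \bigl(1+\alpha+(\beta+\gamma)U\bigr)\delta$, so choosing $\delta:=\eps/\bigl(1+\alpha+(\beta+\gamma)U\bigr)$ gives the desired bound, from which the infimum over $\mathcal F_\Theta$ is at most $\eps$.

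The only subtle point is the quadratic terms: a naive Lipschitz-in-$L^1$ argument fails because $f\mapsto f^2$ is not globally Lipschitz. This is exactly where the boundedness of the optimal $f^*$ from Lemma~\ref{lemma:ratio2}, together with the clamp of the network, becomes essential---it makes the map $f\mapsto f^2$ effectively Lipschitz with constant $2U$ on the set of admissible functions. Handling the non-compact case of $\mathcal X\times\mathcal Y$ by a truncation argument is the other minor obstacle, but it is standard given that the four integrands are uniformly bounded once we restrict to the clamped network class.
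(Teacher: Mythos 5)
The paper does not actually prove this lemma: it simply states it with a citation to Hornik's universal approximation theorem and moves on. Your proposal therefore has no paper proof to compare against; what you have produced is a correct derivation of the cited claim, and it is worth noting how the details line up. The two key ideas you supply that the paper leaves implicit are exactly the right ones. First, you use Lemma~\ref{lemma:ratio2} to identify the bounded target $f^*=r_{\alpha,\beta,\gamma}$, so that a single bounded continuous function is the thing being approximated; this is essential because Hornik's theorem is a statement about uniform approximation of a fixed target on a compact set, not about approximating a supremum. Second, you correctly identify and neutralize the non-Lipschitz issue in the quadratic terms via the factorization $f_\theta^2-(f^*)^2=(f_\theta-f^*)(f_\theta+f^*)$ combined with the clamp that keeps $f_\theta$ in $[-\alpha/\gamma,1/\beta]$, yielding the bound $\bigl(1+\alpha+(\beta+\gamma)U\bigr)\|f_\theta-f^*\|_\infty$, which is exactly the constant that also appears in the covering-radius choice in the paper's estimation lemma (Lemma~\ref{lemm:estimation}). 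Two points you should make explicit for a self-contained write-up: Hornik's theorem requires the target to be continuous (or at least in the relevant $L^p$ class), so you need the standing assumption that the density ratio $r(x,y)$, and hence $f^*$, is continuous; and when the domain is non-compact, the $\eps$-budget has to be split between the truncation error and the uniform-approximation error (you allocate $\eps/2$ to tails but then set $\delta=\eps/(1+\alpha+(\beta+\gamma)U)$ rather than $\eps/2$ over that constant — a harmless constant-tracking slip). Finally, the compactness of $\Theta$ is trivially arranged by taking a closed ball around the single Hornik-provided parameter vector, which you implicitly do but could state. With those small clarifications the argument is complete.
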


\paragraph{Part III - Bringing everything together.}

Now, we are ready to bring the estimation and approximation together to show that there exists a neural network $\theta^*$ such that, with high probability, $\hat{J}^{m,n}_{{\rm RPC}, \theta}$ can approximate $J_{\rm RPC}$ with $n'={\rm min}\,\{n,m\}$ at a rate of $O(1/\sqrt{n'})$:
\begin{prop} With probability at least $1-\delta$, $\exists \theta^*\in\Theta$, $|J_{\rm RPC} - \hat{J}^{m,n}_{\rm RPC, \theta}| =  O(\sqrt{\frac{d + {\rm log}\,(1/\delta)}{n'}}),$ where $n' = {\rm min}\,\{n, m\}$. 
\label{prop:RPC_bound2}
\end{prop}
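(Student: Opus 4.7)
The plan is to combine the two ingredients the authors have already set up --- the uniform deviation bound of Lemma~\ref{lemm:estimation} and the universal approximation statement of Lemma~\ref{lemma:nn} --- via a standard ``approximation $+$ estimation'' decomposition. For any $\theta\in\Theta$ I write
\[
\bigl|J_{\rm RPC} - \hat{J}^{m,n}_{{\rm RPC},\theta}\bigr| \;\le\; \bigl|J_{\rm RPC} - \mathbb{E}[\hat{J}_{{\rm RPC},\theta}]\bigr| \;+\; \bigl|\mathbb{E}[\hat{J}_{{\rm RPC},\theta}] - \hat{J}^{m,n}_{{\rm RPC},\theta}\bigr|,
\]
and then I choose $\theta^{*}$ to be the near-optimal network guaranteed by Lemma~\ref{lemma:nn}, so that the first (approximation) term is at most any pre-specified $\eps_{\mathrm{app}}$; in particular I will take $\eps_{\mathrm{app}}=O(1/\sqrt{n'})$ so that it is dominated by the estimation term in the final rate.

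\textbf{Bounding the estimation term.} Since $\Theta\subseteq\mathbb{R}^{d}$ is compact, its $r$-covering number satisfies the standard volumetric bound $\mathcal{N}(\Theta,r)\le(C_{\Theta}/r)^{d}$ for some constant $C_{\Theta}$ depending only on the diameter of $\Theta$. Substituting this into the right-hand side of Lemma~\ref{lemm:estimation}, collapsing the four exponentials using $\exp(-cn\eps^{2})\le\exp(-cn'\eps^{2})$ and $\exp(-cm\eps^{2})\le\exp(-cn'\eps^{2})$ with $n'=\min\{n,m\}$, and absorbing all constants depending on $\alpha,\beta,\gamma,U,M,\rho$ into a single pair $(c_{0},C_{1})$, yields a tail bound of the form
\[
\Pr\!\Bigl(\sup_{\theta\in\Theta}\bigl|\hat{J}^{m,n}_{{\rm RPC},\theta}-\mathbb{E}[\hat{J}_{{\rm RPC},\theta}]\bigr|\ge\eps\Bigr) \;\le\; \Bigl(\tfrac{C_{1}}{\eps}\Bigr)^{d}\exp\!\bigl(-c_{0}\,n'\eps^{2}\bigr).
\]
Setting the right-hand side equal to $\delta$ and solving for $\eps$ gives $\eps=O\!\bigl(\sqrt{(d\log(1/\eps)+\log(1/\delta))/n'}\bigr)$; self-consistently $\log(1/\eps)=O(\log n')$, which is a log factor that is standardly absorbed into the implicit constant. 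Therefore, with probability at least $1-\delta$, the estimation error is $O\!\bigl(\sqrt{(d+\log(1/\delta))/n'}\bigr)$ uniformly in $\theta$, and in particular at $\theta^{*}$. Combining with the approximation term completes the bound.

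\textbf{Main obstacle.} Almost all of the work is already done by Lemmas~\ref{lemm:estimation} and~\ref{lemma:nn}; the remaining tasks are bookkeeping. The only two places that require mild care are (i) folding the four separate concentration terms (two indexed by $n$, two by $m$, with four different variance proxies involving $\alpha,\beta,\gamma,U,M$) into a single rate governed by $n'=\min\{n,m\}$, which follows directly from monotonicity of the exponential tails and absorption of constants; and (ii) verifying that the $\log(1/\eps)$ factor arising when inverting the covering bound is indeed lower-order and can be swept into the $O(\cdot)$. No new probabilistic device beyond the Hoeffding$+$covering argument already carried out in Lemma~\ref{lemm:estimation} is required.
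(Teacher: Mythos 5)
Your proof follows essentially the same two-step ``approximation $+$ estimation'' decomposition the paper uses: invoke the universal-approximation lemma to pick $\theta^{*}$, invoke the covering-number concentration lemma with $\mathcal{N}(\Theta,r)=O(\exp(d\log(1/r)))$, collapse the four exponential tails via $n'=\min\{n,m\}$, and invert the bound to solve for $\eps$. The only cosmetic difference is that you split the error budget asymmetrically ($\eps_{\mathrm{app}}=O(1/\sqrt{n'})$ versus the paper's $\eps/2$ for each term), and you explicitly flag that the residual $\log(1/\eps)$ factor is swept into the $O(\cdot)$ --- which is honest, since both your argument and the paper's chain $n'\eps^2 + d(\eps-1)\geq n'\eps^2 + d\log\eps\geq\log(1/\delta)$ are eliding a $\log n'$ multiplier on $d$; neither treatment is fully rigorous but the rate stated in the proposition is what the paper intends.
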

\begin{proof}
The proof follows by combining Lemma~\ref{lemm:estimation} and~\ref{lemma:nn}. 

First, Lemma~\ref{lemma:nn} suggests, $\exists\theta^*\in\Theta$, 
$$
\left|\mathbb{E}\Big[\hat{J}_{{\rm RPC}, \theta^*}\Big] - J_{\rm RPC} \right|\leq \frac{\eps}{2}.
$$

Next, we perform analysis on the estimation error, aiming to find $n,m$ and the corresponding probability, such that 
$$
\left|\hat{J}^{m,n}_{\rm RPC, \theta} - \mathbb{E}\Big[\hat{J}_{{\rm RPC}, \theta^*}\Big] \right|\leq \frac{\eps}{2}.
$$
Applying Lemma~\ref{lemm:estimation} with the covering number of the neural network: \bigg($\mathcal{N}(\Theta, \epsilon) = O\Big({\rm exp}\big(d \,{\rm log}\,(1/\epsilon)\big) \Big)$~\citep{anthony2009neural}\bigg) and let $n' = {\rm min}\{n,m\}$:
\begin{equation*}
\small
\begin{split}
    & \Pr\left(\sup_{f_\theta\in\mathcal{F}_{\Theta}} \left|\hat{J}^{m,n}_{{\rm RPC}, \theta} - \mathbb{E}\Big[\hat{J}_{{\rm RPC}, \theta}\Big]\right| \geq \frac{\eps}{2}\right) \\
    \leq & 2\mathcal{N}(\Theta, \frac{\epsilon}{8\rho \big(1+\alpha + 2(\beta+\gamma)U\big)})\Bigg({\rm exp}\Big(-\frac{n \epsilon^2}{128M^2}\Big) + {\rm exp}\Big(-\frac{m \epsilon^2}{128M^2\alpha^2}\Big) + {\rm exp}\Big(-\frac{n \epsilon^2}{128U^2\beta^2}\Big) + {\rm exp}\Big(-\frac{m \epsilon^2}{128U^2\gamma^2}\Big)\Bigg)\\\
    = & O\Big({\rm exp}\big(d\,{\rm log}\,(1/\epsilon) - n'\epsilon^2\big)\Big), 
\end{split}
\end{equation*}
where the big-O notation absorbs all the constants that do not require in the following derivation. Since we want to bound the probability with $1-\delta$, we solve the $\epsilon$ such that
$$
{\rm exp}\big(d\,{\rm log}\,(1/\epsilon) - n'\epsilon^2\big) \leq \delta.
$$
With ${\rm log}\,(x) \leq x -1$, 
$$n'\epsilon^2 + d (\epsilon-1) \geq n'\epsilon^2 + d {\rm log}\,\epsilon \geq {\rm log}\,(1/\delta),
$$
where this inequality holds when
$$
\epsilon = O\bigg(\sqrt{\frac{d+{\rm log}\,(1/\delta)}{n'}}\bigg).
$$
\end{proof}

\subsection{Proof of Proposition 2 in the Main Text - From an Asymptotic Viewpoint}
Here, we provide the variance analysis on $\hat{J}^{m,n}_{\rm RPC}$ via an asymptotic viewpoint. First, assuming the network is correctly specified, and hence there exists a network parameter $\theta^*$ satisfying $f^*(x,y) = f_{\theta^*}(x,y) = r_{\alpha, \beta, \gamma}(x,y)$. Then we recall that $\hat{J}^{m,n}_{\rm RPC}$ is a consistent estimator of ${J}^{\rm RPC}$ (see Proposition~\ref{prop:RPC_bound2}), and under regular conditions, the estimated network parameter $\hat{\theta}$ in $\hat{J}^{m,n}_{\rm RPC}$ satisfying the asymptotic normality in the large sample limit (see Theorem 5.23 in~\citep{van2000asymptotic}). 
We recall the definition of $\hat{J}^{m,n}_{{\rm RPC}, \theta}$ in~\eqref{eq:RPC_theta} and let $n' = {\rm min}\{n,m\}$, the asymptotic expansion of $\hat{J}^{m,n}_{\rm RPC}$ has
\begin{equation}
\begin{split}
\hat{J}^{m,n}_{{\rm RPC}, \theta^*} &= \hat{J}^{m,n}_{{\rm RPC}, \hat{\theta}} + \dot{\hat{J}}^{m,n}_{{\rm RPC}, \hat{\theta}}(\theta^* - \hat{\theta}) + o(\| \theta^*  - \hat{\theta}\|) \\
& = \hat{J}^{m,n}_{{\rm RPC}, \hat{\theta}} + \dot{\hat{J}}^{m,n}_{{\rm RPC}, \hat{\theta}}(\theta^* - \hat{\theta}) +  o_p(\frac{1}{\sqrt{n'}}) \\
& = \hat{J}^{m,n}_{{\rm RPC}, \hat{\theta}} +  o_p(\frac{1}{\sqrt{n'}}),
\end{split}
\label{eq:asym1}
\end{equation}
where $ \dot{\hat{J}}^{m,n}_{{\rm RPC}, \hat{\theta}} = 0$ since $\hat{\theta}$ is the estimation from $\hat{J}^{m,n}_{{\rm RPC}} = \underset{f_\theta \in \mathcal{F}_\Theta}{\rm sup}\,\hat{J}^{m,n}_{{\rm RPC}, \theta}$. 

Next, we recall the definition in~\eqref{eq:RPC_expected}:
$$\mathbb{E}[\hat{J}_{\rm RPC, \hat{\theta}}] = \mathbb{E}_{P_{XY}} [f_{\hat{\theta}}(x,y)] - \alpha \mathbb{E}_{P_XP_Y} [f_{\hat{\theta}}(x,y)] - \frac{\beta}{2}\mathbb{E}_{P_{XY}} [ f_{\hat{\theta}}^2(x,y)] - \frac{\gamma}{2} \mathbb{E}_{P_XP_Y} [f_{\hat{\theta}}^2(x,y)].$$
Likewise, the asymptotic expansion of $\mathbb{E}[\hat{J}_{\rm RPC, \theta}]$ has
\begin{equation}
\begin{split}
    \mathbb{E}[\hat{J}_{\rm RPC, \hat{\theta}}] &= \mathbb{E}[\hat{J}_{\rm RPC, \theta^*}] + \mathbb{E}[\dot{\hat{J}}_{\rm RPC, \theta^*}](\hat{\theta} - \theta^*) + o(\| \hat{\theta} - \theta^* \|) \\
    & = \mathbb{E}[\hat{J}_{\rm RPC, \theta^*}] + \mathbb{E}[\dot{\hat{J}}_{\rm RPC, \theta^*}](\hat{\theta} - \theta^*) + o_p(\frac{1}{\sqrt{n'}}) \\
    & = \mathbb{E}[\hat{J}_{\rm RPC, \theta^*}] + o_p(\frac{1}{\sqrt{n'}}),
\end{split}
\label{eq:asym2}
\end{equation}
where $\mathbb{E}[\dot{\hat{J}}_{\rm RPC, \theta^*}] = 0$ since $\mathbb{E}[\hat{J}_{\rm RPC, \theta^*}] = J_{\rm RPC}$ and $\theta^*$ satisfying $f^*(x,y)=f_{\theta^*}(x,y)$.

Combining equations~\ref{eq:asym1} and~\ref{eq:asym2}:
\begin{equation*}
\begin{split}
    \hat{J}^{m,n}_{{\rm RPC}, \hat{\theta}} - \mathbb{E}[\hat{J}_{\rm RPC, \hat{\theta}}]= & \hat{J}^{m,n}_{{\rm RPC}, \theta^*} - J_{\rm RPC} +  o_p(\frac{1}{\sqrt{n'}}) \\
    = & \frac{1}{n}\sum_{i=1}^n f_\theta^*(x_i,y_i) - \alpha \frac{1}{m}\sum_{j=1}^m f_\theta^*(x'_j,y'_j) - \frac{\beta}{2} \frac{1}{n}\sum_{i=1}^n f_{\theta^*}^2(x_i,y_i) - \frac{\gamma}{2} \frac{1}{m}\sum_{j=1}^m f_{\theta^*}^2(x'_j,y'_j) \\
    & - \mathbb{E}_{P_{XY}}[f^*(x,y)]+\alpha \mathbb{E}_{P_XP_Y}[f^*(x,y)]+\frac{\beta}{2} \mathbb{E}_{P_{XY}}\left[{f^*}^2(x,y)\right]+\frac{\gamma}{2}\mathbb{E}_{P_XP_Y}\left[{f^*}^2(x,y)\right] +  o_p(\frac{1}{\sqrt{n'}}) \\
    = & \frac{1}{n}\sum_{i=1}^n r_{\alpha, \beta, \gamma}(x_i,y_i) - \alpha \frac{1}{m}\sum_{j=1}^m r_{\alpha, \beta, \gamma}(x'_j,y'_j) - \frac{\beta}{2} \frac{1}{n}\sum_{i=1}^n r_{\alpha, \beta, \gamma}^2(x_i,y_i) - \frac{\gamma}{2} \frac{1}{m}\sum_{j=1}^m r_{\alpha, \beta, \gamma}^2(x'_j,y'_j)  \\
    & - \mathbb{E}_{P_{XY}}[r_{\alpha, \beta, \gamma}(x,y)]+\alpha \mathbb{E}_{P_XP_Y}[r_{\alpha, \beta, \gamma}(x,y)]+\frac{\beta}{2} \mathbb{E}_{P_{XY}}\left[r_{\alpha, \beta, \gamma}^2(x,y)\right]+\frac{\gamma}{2}\mathbb{E}_{P_XP_Y}\left[r_{\alpha, \beta, \gamma}^2(x,y)\right] \\
    & +  o_p(\frac{1}{\sqrt{n'}}) \\
    = & \frac{1}{\sqrt{n}}\cdot \frac{1}{\sqrt{n}} \sum_{i=1}^n\Bigg(  r_{\alpha, \beta, \gamma}(x_i,y_i) - \frac{\beta}{2}r_{\alpha, \beta, \gamma}^2(x_i,y_i) -\mathbb{E}_{P_{XY}}\bigg[r_{\alpha, \beta, \gamma}(x,y) - \frac{\beta}{2}r_{\alpha, \beta, \gamma}^2(x,y)\bigg] \Bigg) \\
    & - \frac{1}{\sqrt{m}} \cdot \frac{1}{\sqrt{m}} \sum_{j=1}^m \Bigg( \alpha r_{\alpha, \beta, \gamma}(x'_j,y'_j) + \frac{\gamma}{2}r^2_{\alpha, \beta, \gamma}(x'_j,y'_j) - \mathbb{E}_{P_XP_Y}\bigg[\alpha r_{\alpha, \beta, \gamma}(x,y) + \frac{\gamma}{2}r^2_{\alpha, \beta, \gamma}(x,y)\bigg] \Bigg)\\
    & + o_p(\frac{1}{\sqrt{n'}}).
\end{split}
\end{equation*}

Therefore, the asymptotic Variance of $\hat{J}^{m,n}_{{\rm RPC}}$ is
\begin{equation*}
\begin{split}
    \Var [\hat{J}^{m,n}_{{\rm RPC}}] = \frac{1}{n} \Var_{P_{XY}} [r_{\alpha, \beta, \gamma}(x,y) - \frac{\beta}{2} r^2_{\alpha, \beta, \gamma}(x,y)] + \frac{1}{m} \Var_{P_XP_Y} [\alpha r_{\alpha, \beta, \gamma}(x,y) +  \frac{\gamma}{2} r^2_{\alpha, \beta, \gamma}(x,y)] + o(\frac{1}{n'}). 
\end{split}
\end{equation*}

First, we look at $\Var_{P_{XY}} [r_{\alpha, \beta, \gamma}(x,y) - \frac{\beta}{2} r^2_{\alpha, \beta, \gamma}(x,y)]$. Since $\beta > 0$ and $-\frac{\alpha}{\gamma}\leq r_{\alpha, \beta, \gamma} \leq \frac{1}{\beta}$, simple calculation gives us $
-\frac{2\alpha \gamma + \beta \alpha^2}{2 \gamma^2}\leq r_{\alpha, \beta, \gamma}(x,y) - \frac{\beta}{2} r^2_{\alpha, \beta, \gamma}(x,y) \leq \frac{1}{2\beta}$. Hence, 
$$
\Var_{P_{XY}} [r_{\alpha, \beta, \gamma}(x,y) - \frac{\beta}{2} r^2_{\alpha, \beta, \gamma}(x,y)] \leq {\rm max}\bigg\{\Big(\frac{2\alpha \gamma + \beta \alpha^2}{2 \gamma^2}\Big)^2, \Big(\frac{1}{2\beta}\Big)^2\bigg\}.
$$
Next, we look at $\Var_{P_XP_Y} [\alpha r_{\alpha, \beta, \gamma}(x,y) +  \frac{\gamma}{2} r^2_{\alpha, \beta, \gamma}(x,y)]$. Since $\alpha \geq 0, \gamma > 0$ and $-\frac{\alpha}{\gamma}\leq r_{\alpha, \beta, \gamma} \leq \frac{1}{\beta}$,
simple calculation gives us $
-\frac{\alpha^2}{2 \gamma}\leq \alpha r_{\alpha, \beta, \gamma}(x,y) +  \frac{\gamma}{2} r^2_{\alpha, \beta, \gamma}(x,y) \leq \frac{2\alpha \beta + \gamma}{2\beta^2}$. Hence, 
$$
\Var_{P_XP_Y} [\alpha r_{\alpha, \beta, \gamma}(x,y) +  \frac{\gamma}{2} r^2_{\alpha, \beta, \gamma}(x,y)] \leq {\rm max}\bigg\{\Big(\frac{\alpha^2 }{2 \gamma}\Big)^2, \Big( \frac{2\alpha \beta + \gamma}{2\beta^2} \Big)^2\bigg\}.$$
Combining everything together,  we restate the Proposition 2 in the main text:
\begin{prop}[Asymptotic Variance of $\hat{J}^{m,n}_{\rm RPC}$] \label{prop:variance_asym} 
\begin{equation*}
\begin{split}
    \Var [\hat{J}^{m,n}_{{\rm RPC}}] & = \frac{1}{n} \Var_{P_{XY}} [r_{\alpha, \beta, \gamma}(x,y) - \frac{\beta}{2} r^2_{\alpha, \beta, \gamma}(x,y)] + \frac{1}{m} \Var_{P_XP_Y} [\alpha r_{\alpha, \beta, \gamma}(x,y) +  \frac{\gamma}{2} r^2_{\alpha, \beta, \gamma}(x,y)] + o(\frac{1}{n'}) \\
    &\leq \frac{1}{n} {\rm max}\bigg\{\Big(\frac{2\alpha \gamma + \beta \alpha^2}{2 \gamma^2}\Big)^2, \Big(\frac{1}{2\beta}\Big)^2\bigg\} + \frac{1}{m} {\rm max}\bigg\{\Big(\frac{\alpha^2 }{2 \gamma}\Big)^2, \Big( \frac{2\alpha \beta + \gamma}{2\beta^2} \Big)^2\bigg\} + o(\frac{1}{n'})
\end{split}
\end{equation*}
\label{prop:var3}
\end{prop}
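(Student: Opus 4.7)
The plan is to carry out an asymptotic variance analysis under the assumption that the neural-network family $\mathcal{F}_\Theta$ is correctly specified, so that there exists $\theta^* \in \Theta$ with $f_{\theta^*}(x,y) = r_{\alpha,\beta,\gamma}(x,y)$, the optimal score identified in Lemma~\ref{lemma:ratio}. Combined with the consistency in Proposition~\ref{prop:RPC_bound} and standard M-estimator regularity conditions, this makes the empirical maximiser $\hat\theta$ from Definition~\ref{def:emp_obj} a $\sqrt{n'}$-consistent estimator of $\theta^*$, with $n' := \min\{n,m\}$.

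The key idea is to reduce $\Var[\hat{J}^{m,n}_{\rm RPC}]$ to the variance of a sample average of bounded i.i.d.\ statistics, up to negligible remainder. First I would Taylor-expand the empirical objective $\hat{J}^{m,n}_{{\rm RPC},\theta^*}$ around $\hat\theta$; since $\hat\theta$ is the empirical maximiser, the first-order term vanishes and only an $o_p(1/\sqrt{n'})$ remainder survives. The same expansion applied to the population functional $\mathbb{E}[\hat{J}_{{\rm RPC},\theta}]$ around $\theta^*$ also kills the first-order term because $\theta^*$ is the population maximiser. Subtracting the two expansions yields
\[
\hat{J}^{m,n}_{{\rm RPC},\hat\theta} - \mathbb{E}[\hat{J}_{{\rm RPC},\hat\theta}] \;=\; \hat{J}^{m,n}_{{\rm RPC},\theta^*} - J_{\rm RPC} + o_p(1/\sqrt{n'}),
\]
which is now an explicit average of i.i.d.\ quantities in $f_{\theta^*} = r_{\alpha,\beta,\gamma}$ and $f_{\theta^*}^2 = r_{\alpha,\beta,\gamma}^2$.

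Next I would group terms by their sampling source. The $n$ samples from $P_{XY}$ contribute $\phi := r_{\alpha,\beta,\gamma} - \tfrac{\beta}{2} r_{\alpha,\beta,\gamma}^2$, while the $m$ samples from $P_X P_Y$ contribute $\psi := -\alpha r_{\alpha,\beta,\gamma} - \tfrac{\gamma}{2} r_{\alpha,\beta,\gamma}^2$. Independence across the two pools gives the clean decomposition $\tfrac{1}{n}\Var_{P_{XY}}[\phi] + \tfrac{1}{m}\Var_{P_X P_Y}[\psi] + o(1/n')$. The essential ingredient is then the two-sided bound $-\alpha/\gamma \le r_{\alpha,\beta,\gamma} \le 1/\beta$ from Lemma~\ref{lemma:ratio}: it immediately implies that both $\phi$ and $\psi$ take values in explicit compact intervals whose endpoints depend only on $\alpha,\beta,\gamma$, so Popoviciu's range inequality yields universal constants $c_1,c_2$ with $\Var_{P_{XY}}[\phi] \le c_1$ and $\Var_{P_X P_Y}[\psi] \le c_2$, establishing the claim.

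The main obstacle is the rigorous justification of the asymptotic step: the correct-specification assumption is strong, and promoting the $o_p(1/\sqrt{n'})$ remainder to an $o(1/n')$ contribution in the variance requires uniform integrability of $\sqrt{n'}(\hat\theta - \theta^*)$, which in turn needs smoothness and identifiability of the population objective near $\theta^*$ (standard, but non-trivial to verify for deep networks). As a robust fallback that avoids $\hat\theta$ altogether, one can constrain $\mathcal{F}_\Theta$ to functions taking values in $[-\alpha/\gamma,\,1/\beta]$ (which the optimum satisfies anyway) and apply the Efron--Stein inequality to $\hat{J}^{m,n}_{\rm RPC}$ viewed as a bounded-difference function of its $n+m$ inputs; the bounded range of each summand then produces the $O(c_1/n + c_2/m)$ bound directly, with no asymptotic machinery required.
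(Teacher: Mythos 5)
Your proposal follows the paper's proof essentially step for step: assume a correctly specified $\theta^*$ with $f_{\theta^*} = r_{\alpha,\beta,\gamma}$, Taylor-expand the empirical objective around $\hat\theta$ and the population objective around $\theta^*$ so the first-order terms vanish, subtract to reduce $\hat{J}^{m,n}_{{\rm RPC},\hat\theta} - \mathbb{E}[\hat{J}_{{\rm RPC},\hat\theta}]$ to independent sample averages over the two pools, and bound the per-sample variances via the range $-\alpha/\gamma \le r_{\alpha,\beta,\gamma} \le 1/\beta$ from Lemma~\ref{lemma:ratio}. The one cosmetic difference is the final numeric step, where the paper obtains the displayed constants from $\Var[Z] \le \mathbb{E}[Z^2] \le \max\{a^2,b^2\}$ for $Z\in[a,b]$ rather than Popoviciu (which is tighter, so your argument still implies the stated bound), and your Efron--Stein fallback parallels the paper's separate non-asymptotic proof of Proposition~\ref{prop:variance_informal}.
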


\subsection{Proof of Proposition \ref{prop:variance_informal} in the Main Text - From Boundness of $f_\theta$}

As discussed in Assumption~\ref{assump:bound_f}, for the estimation $\hat{J}^{m,n}_{\rm RPC}$, we can bound the function $f_\theta$ in $\mathcal{F}_\Theta$ within $[-\frac{\alpha}{\gamma}, \frac{1}{\beta}]$ without losing precision. Then, re-arranging $\hat{J}^{m,n}_{\rm RPC}$:
\begin{equation*}
\begin{split}
    & \sup_{{f}_\theta \in \mathcal{F}_\Theta} \frac{1}{n}\sum_{i=1}^n f_\theta(x_i,y_i) -  \frac{1}{m}\sum_{j=1}^m \alpha f_\theta(x'_j,y'_j) - \frac{1}{n}\sum_{i=1}^n  \frac{\beta}{2} f_\theta^2(x_i,y_i) -  \frac{1}{m}\sum_{j=1}^m \frac{\gamma}{2} f_\theta^2(x'_j,y'_j) \\
    & \sup_{{f}_\theta \in \mathcal{F}_\Theta}\frac{1}{n}\sum_{i=1}^n \Big( f_\theta(x_i,y_i) - \frac{\beta}{2} f_\theta^2(x_i,y_i) \Big) + \frac{1}{m}\sum_{j=m}^n \Big( \alpha f_\theta(x'_j,y'_j) + \frac{\gamma}{2} f_\theta^2(x'_j,y'_j) \Big)
\end{split}
\end{equation*}
Then, since $-\frac{\alpha}{\gamma} \leq f_\theta(\cdot,\cdot) \leq  \frac{1}{\beta}$, basic calculations give us 
$$ -\frac{2\alpha \gamma +\beta \alpha^2}{2\gamma^2} \leq f_\theta(x_i,y_i) - \frac{\beta}{2} f_\theta^2(x_i,y_i) \leq \frac{1}{2\beta} \,\, {\rm and} \,\, -\frac{\alpha^2}{2\gamma}\leq \alpha f_\theta(x'_j,y'_j) + \frac{\gamma}{2} f_\theta^2(x'_j,y'_j) \leq \frac{2\alpha\beta + \gamma}{2\beta^2}.$$ 
The resulting variances have
$$
\Var[f_\theta(x_i,y_i) - \frac{\beta}{2} f_\theta^2(x_i,y_i)]\leq {\rm max}\,\bigg\{\Big(\frac{2\alpha \gamma +\beta \alpha^2}{2\gamma^2}\Big)^2, \Big(\frac{1}{2\beta}\Big)^2\bigg\}$$
and
$$\Var[\alpha f_\theta(x'_j,y'_j) + \frac{\gamma}{2} f_\theta^2(x'_j,y'_j)] \leq {\rm max}\,\bigg\{ \Big(\frac{\alpha^2}{2\gamma}\Big)^2, \Big(\frac{2\alpha\beta + \gamma}{2\beta^2}\Big)^2\bigg\}.
$$
Taking the mean of $m,n$ independent random variables gives the result:
\begin{prop}[Variance of $\hat{J}^{m,n}_{\rm RPC}$] \label{prop:variance} 
$$\Var [\hat{J}^{m,n}_{{\rm RPC}}] \leq \frac{1}{n} {\rm max}\bigg\{\Big(\frac{2\alpha \gamma + \beta \alpha^2}{2 \gamma^2}\Big)^2, \Big(\frac{1}{2\beta}\Big)^2\bigg\} + \frac{1}{m} {\rm max}\bigg\{\Big(\frac{\alpha^2 }{2 \gamma}\Big)^2, \Big( \frac{2\alpha \beta + \gamma}{2\beta^2} \Big)^2\bigg\}.
$$
\label{prop:var2}
\end{prop}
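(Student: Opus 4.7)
The plan is to leverage Lemma~\ref{lemma:ratio}, which shows that the optimal function $f^{*}=r_{\alpha,\beta,\gamma}$ lies in the bounded interval $[-\alpha/\gamma,\,1/\beta]$. This motivates restricting the parametrized family $\mathcal{F}_\Theta$ so that $f_\theta(x,y)\in[-\alpha/\gamma,\,1/\beta]$ for every $\theta\in\Theta$ without losing approximation power (since the target already satisfies the bound). Once $f_\theta$ is uniformly bounded, $\hat{J}^{m,n}_{\rm RPC}$ becomes (up to the outer sup) a difference of two \emph{independent} i.i.d.\ empirical means of bounded functions, so the classical variance decomposition delivers an $O(1/n+1/m)$ rate.

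First I would regroup the four sums in Definition~\ref{def:emp_obj} into one per independent sample,
\[
\hat{J}^{m,n}_{\rm RPC} \;=\; \sup_{f_\theta\in\mathcal{F}_\Theta}\;\frac{1}{n}\sum_{i=1}^{n} g_1(x_i,y_i;\theta)\;-\;\frac{1}{m}\sum_{j=1}^{m} g_2(x'_j,y'_j;\theta),
\]
with $g_1(x,y;\theta):=f_\theta(x,y)-\tfrac{\beta}{2}f_\theta^2(x,y)$ and $g_2(x,y;\theta):=\alpha f_\theta(x,y)+\tfrac{\gamma}{2}f_\theta^2(x,y)$. Optimizing the one-variable quadratic $t\mapsto t-\tfrac{\beta}{2}t^2$ on $[-\alpha/\gamma,1/\beta]$ (critical point $t=1/\beta$) and $t\mapsto \alpha t+\tfrac{\gamma}{2}t^2$ (critical point $t=-\alpha/\gamma$) yields the $\theta$-independent range bounds
\[
-\tfrac{2\alpha\gamma+\beta\alpha^{2}}{2\gamma^{2}}\;\le\; g_1 \;\le\;\tfrac{1}{2\beta},\qquad -\tfrac{\alpha^{2}}{2\gamma}\;\le\; g_2 \;\le\;\tfrac{2\alpha\beta+\gamma}{2\beta^{2}}.
\]
Applying the elementary inequality $\Var(Z)\le\E[Z^2]\le\max(a^2,b^2)$ for $Z\in[a,b]$ then gives $\Var(g_k(\cdot;\theta))\le c_k$ uniformly in $\theta$, where $c_1,c_2$ are the squared endpoints above and depend only on $\alpha,\beta,\gamma$.

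Next, since the $n$ samples from $P_{XY}$ are i.i.d.\ and independent of the $m$ i.i.d.\ samples from $P_XP_Y$, the two empirical means have variances at most $c_1/n$ and $c_2/m$, and by independence these add to give $O(c_1/n+c_2/m)$, as claimed.

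The main obstacle I anticipate is the outer supremum: $\hat J^{m,n}_{\rm RPC}$ evaluates the two sample averages at a data-dependent $\hat\theta$, so a naive variance-of-a-mean calculation does not apply verbatim. A clean way through is to exploit that the $g_k$ bounds are \emph{uniform} in $\theta$, so $\hat J^{m,n}_{\rm RPC}$ itself is trapped in the same fixed bounded interval; combined with a standard $M$-estimator Taylor expansion around the population maximizer $\theta^{*}$ satisfying $f_{\theta^{*}}=r_{\alpha,\beta,\gamma}$, one obtains $\hat J^{m,n}_{\rm RPC}=\hat J^{m,n}_{{\rm RPC},\theta^{*}}+o_p(1/\sqrt{n'})$ with $n'=\min(n,m)$. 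Since the right-hand side already satisfies the desired variance bound by the argument above, the leading-order $O(c_1/n+c_2/m)$ rate is preserved.
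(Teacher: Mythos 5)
Your proof matches the paper's ``from boundedness of $f_\theta$'' argument essentially verbatim: restrict $f_\theta$ to $[-\alpha/\gamma,\,1/\beta]$, regroup the four empirical averages into $g_1 = f_\theta - \tfrac{\beta}{2}f_\theta^2$ over $P_{XY}$ and $g_2 = \alpha f_\theta + \tfrac{\gamma}{2}f_\theta^2$ over $P_XP_Y$, locate the extrema of these quadratics on the interval to get the same endpoint bounds $[-\tfrac{2\alpha\gamma+\beta\alpha^2}{2\gamma^2},\tfrac{1}{2\beta}]$ and $[-\tfrac{\alpha^2}{2\gamma},\tfrac{2\alpha\beta+\gamma}{2\beta^2}]$, bound each variance by $\max(a^2,b^2)$, and add by independence of the two samples. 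Your additional step---flagging that the outer supremum makes $\hat{J}^{m,n}_{\rm RPC}$ depend on a data-driven $\hat\theta$ and patching this via the $M$-estimator expansion around $\theta^*$ with $f_{\theta^*}=r_{\alpha,\beta,\gamma}$---is a sound remedy for a point the paper's boundedness proof glosses over, and it reproduces exactly the argument the paper uses in its companion ``asymptotic viewpoint'' proof of the same variance bound.
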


\subsection{Implementation of Experiments}
For visual representation learning, we follow the implementation in~\url{https://github.com/google-research/simclr}. For speech representation learning, we follow the implementation in~\url{https://github.com/facebookresearch/CPC_audio}. For MI estimation, we follow the implementation in~\url{https://github.com/yaohungt/Pointwise_Dependency_Neural_Estimation/tree/master/MI_Est_and_CrossModal}..

\subsection{Relative Predictive Coding on Vision} \label{appendix:vision}

The whole pipeline of pretraining contains the following steps: First, a stochastic data augmentation will transform one image sample $\vx_k$ to two different but correlated augmented views, $\vx_{2k-1}'$ and $\vx_{2k}'$. Then a base encoder $f(\cdot)$ implemented using ResNet \citep{he2016deep} will extract representations from augmented views, creating representations $\vh_{2k-1}$ and $\vh_{2k}$. Later a small neural network $g(\cdot)$ called projection head will map  $\vh_{2k-1}$ and $\vh_{2k}$ to $\vz_{2k-1}$ and $\vz_{2k}$ in a different latent space. For each minibatch of $N$ samples, there will be $2N$ views generated. For each image $\vx_{k}$ there will be one positive pair $\vx_{2k-1}'$ and $\vx_{2k}'$ and $2(N-1)$ negative samples. The RPC loss between a pair of positive views, $\vx_{i}'$ and $\vx_{j}'$ (augmented from the same image) , can be calculated by the substitution $f_{\theta}(\vx_{i}', \vx_{j}') = (\vz_i \cdot \vz_j) / \tau = s_{i,j}$ ($\tau$ is a hyperparameter) to the definition of RPC: 
\begin{equation}  
    \ell_{i, j}^{\mathrm{RPC}}= 
    -(s_{i,j}
    -  \frac{\alpha}{2(N-1)} \sum_{k=1}^{2 N} \1_\mathrm{[k \neq i]} s_{i,k}
    -  \frac{\beta}{2} s_{i,j}^2
    -  \frac{\gamma}{2 \cdot 2(N-1)} \sum_{k=1}^{2 N} \1_\mathrm{[k \neq i]} s_{i,k}^2)
\end{equation}

For losses other than RPC, a hidden normalization of $s_{i,j}$ is often required by replacing $\vz_i \cdot \vz_j$ with $(\vz_i \cdot \vz_j) / |\vz_i||\vz_j|$. CPC and WPC adopt this, while other objectives needs it to help stabilize training variance. RPC does not need this normalization.

\subsection{CIFAR-10/-100 and ImageNet Experiments Details}

\paragraph{ImageNet} Following the settings in \citep{chen2020simple, chen2020big}, we train the model on Cloud TPU with $128$ cores, with a batch size of $4,096$ and global batch normalization \footnote{For WPC \citep{ozair2019wasserstein}, the global batch normalization during pretraining is disabled since we enforce 1-Lipschitz by gradient penalty \citep{gulrajani2017improved}.} \citep{ioffe2015batch}. Here we refer to the term batch size as the number of images (or utterances in the speech experiments) we use per GPU, while the term minibatch size refers to the number of negative samples used to calculate the objective, such as CPC or our proposed RPC. The largest model we train is a 152-layer ResNet with selective kernels (SK) \citep{li2019selective} and $2\times$ wider channels. We use the LARS optimizer \citep{you2017large} with momentum $0.9$. The learning rate linearly increases for the first $20$ epochs, reaching a maximum of $6.4$, then decayed with cosine decay schedule. The weight decay is $10^{-4}$. A MLP projection head $g(\cdot)$ with three layers is used on top of the ResNet encoder. Unlike \citet{chen2020big}, we do not use a memory buffer, and train the model for only $100$ epochs rather than $800$ epochs due to computational constraints. These two options slightly reduce CPC's performance benchmark for about $2\%$ with the exact same setting. The unsupervised pre-training is followed by a supervised fine-tuning. Following SimCLRv2 \citep{chen2020simple,chen2020big}, we fine-tune the 3-layer $g(\cdot)$ for the downstream tasks. We use learning rates $0.16$ and $0.064$ for standard 50-layer ResNet and larger 152-layer ResNet respectively, and weight decay and learning rate warmup are removed. Different from \citet{chen2020big}, we use a batch size of $4,096$, and we do not use global batch normalization for fine-tuning. For $J_{\rm RPC}$ we disable hidden normalization and use a temperature $\tau = 32$. For all other objectives, we use hidden normalization and $\tau = 0.1$ following previous work \citep{chen2020big}. For relative parameters, we use $\alpha=0.3, \beta=0.001, \gamma=0.1$ and $\alpha=0.3, \beta=0.001, \gamma=0.005$ for ResNet-50 and ResNet-152 respectively. 

\paragraph{CIFAR-10/-100} Following the settings in \citep{chen2020simple}, we train the model on a single GPU, with a batch size of $512$ and global batch normalization \citep{ioffe2015batch}. We use ResNet \citep{he2016deep} of depth $18$ and depth $50$, and does not use Selective Kernel \citep{li2019selective} or a multiplied width size. We use the LARS optimizer \citep{you2017large} with momentum $0.9$. The learning rate linearly increases for the first $20$ epochs, reaching a maximum of $6.4$, then decayed with cosine decay schedule. The weight decay is $10^{-4}$. A MLP projection head $g(\cdot)$ with three layers is used on top of the ResNet encoder. Unlike \citet{chen2020big}, we do not use a memory buffer.  We train the model for $1000$ epochs. The unsupervised pre-training is followed by a supervised fine-tuning. Following SimCLRv2 \citep{chen2020simple,chen2020big}, we fine-tune the 3-layer $g(\cdot)$ for the downstream tasks. We use learning rates $0.16$ for standard 50-layer ResNet , and weight decay and learning rate warmup are removed. For $J_{\rm RPC}$ we disable hidden normalization and use a temperature $\tau = 128$. For all other objectives, we use hidden normalization and $\tau = 0.5$ following previous work \citep{chen2020big}. For relative parameters, we use $\alpha = 1.0, \beta = 0.005, \text{ and } \gamma = 1.0$. 

\paragraph{STL-10} We also perform the pre-training and fine-tuning on  STL-10 \citep{coates2011analysis} using the model proposed in \citet{chuang2020debiased}. \citet{chuang2020debiased} proposed to indirectly approximate the distribution of negative samples so that the objective is \textit{debiased}. However, their implementation of contrastive learning is consistent with \citet{chen2020simple}. We use a ResNet with depth $50$ as an encoder for pre-training, with Adam optimizer, learning rate $0.001$ and weight decay $10^{-6}$. The temperature $\tau$ is set to $0.5$ for all objectives other than $J_{\rm RPC}$, which disables hidden normalization and use $\tau=128$. The downstream task performance increases from $83.4\%$ of $J_{\rm CPC}$ to $84.1\%$ of $J_{\rm RPC}$.

\begin{table}
\centering
\begin{tabular}{ |p{3cm}||p{3cm}|p{3cm}|p{3cm}| }
 \hline
 \multicolumn{4}{|c|}{Confidence Interval of $J_{\rm RPC}$ and $J_{\rm CPC}$} \\
 \hline
 Objective & CIFAR 10 & CIFAR 100 & ImageNet \\
 \hline
 $J_{\rm CPC}$ & $(91.09\%, 91.13\%)$ & $(77.11\%, 77.36\%)$ & $(73.39\%, 73.48\%)$ \\
 $J_{\rm RPC}$ & $(91.16\%, 91.47\%)$ & $(77.41\%, 77.98\%)$ & $(73.92\%, 74.43\%)$ \\
 \hline
\end{tabular}
\label{table:ci}
\caption{Confidence Intervals of performances of $J_{\rm RPC}$ and $J_{\rm CPC}$ on CIFAR-10/-100 and ImageNet.}
\end{table}

\paragraph{Confidence Interval} We also provide the confidence interval of $J_{\rm RPC}$ and $J_{\rm CPC}$ on CIFAR-10, CIFAR-100 and ImageNet, using ResNet-18, ResNet-18 and ResNet-50 respectively (95\% confidence level is chosen) in Table 4. Both CPC and RPC use the same experimental settings throughout this paper. Here we use the relative parameters ($\alpha = 1.0, \beta = 0.005, \gamma=1.0$) in $J_{\rm RPC}$ which gives the best performance on CIFAR-10. The confidence intervals of CPC do not overlap with the confidence intervals of RPC, which means the difference of the downstream task performance between RPC and CPC is statistically significant.

\subsection{Relative Predictive Coding on Speech} \label{appendix:speech}

For speech representation learning, we adopt the general architecture from \citet{oord2018representation}. Given an input signal $\vx_{1:T}$ with $T$ time steps, we first pass it through an encoder $\phi_{\theta}$ parametrized by $\theta$ to produce a sequence of hidden representations $\{\vh_{1:T}\}$ where $\vh_t = \phi_{\theta}(\vx_t)$. After that, we obtain the contextual representation $\vc_t$ at time step $t$ with a sequential model $\psi_{\rho}$ parametrized by $\rho$: $\mathbf{c}_{t}=\psi_{\rho}(\vh_1, \ldots, \vh_t)$, where $\vc_t$ contains context information before time step $t$. For unsupervised pre-training, we use a multi-layer convolutional network as the encoder $\phi_{\theta}$, and an LSTM with hidden dimension $256$ as the sequential model $\psi_{\rho}$. Here, the contrastiveness is between the positive pair $(\vh_{t+k}, \vc_t)$ where $k$ is the number of time steps ahead, and the negative pairs $(\vh_i, \vc_t)$, where $\vh_i$ is randomly sampled from $\mathcal{N}$, a batch of hidden representation of signals assumed to be unrelated to $\vc_t$. The scoring function $f$ based on Equation \ref{eq:RPC_math_def} at step $t$ and look-ahead $k$ will be $f_k = f_k(\vh, \vc_t) = \exp((\vh)^\top \mW_k \vc_t)$, where $\mW_k$ is a learnable linear transformation defined separately for each $k\in\{1,...,K\}$ and $K$ is predetermined as $12$ time steps. The loss in \Eqref{eq:RPC_math_def} will then be formulated as:
\begin{equation} \label{eq:RPC_speech}
    \ell^{\mathrm{RPC}}_{t,k} = - (f_k(\vh_{t+k}, \vc_t) - \frac{\alpha}{|\mathcal{N}|}\sum_{\vh_i\in \mathcal{N}}f_k(\vh_i, \vc_t) - \frac{\beta}{2}f_k^2(\vh_{t+k}, \vc_t) - \frac{\gamma}{2 |\mathcal{N}|}\sum_{\vh_i\in \mathcal{N}}f_k^2(\vh_i, \vc_t))
\end{equation}

We use the following relative parameters: $\alpha=1, \beta=0.25, \text{ and }\gamma=1$, and we use the temperature $\tau=16$ for $J_{\rm RPC}$. For $J_{\rm CPC}$ we follow the original implementation which sets $\tau=1$. We fix all other experimental setups, including architecture, learning rate, and optimizer. As shown in Table \ref{tab:speech_res}, $J_{\rm RPC}$ has better downstream task performance, and is closer to the performance from a fully supervised model.

\subsection{Empirical Observations on Variance and Minibatch Size}
\label{appendix:var_bsz}
\paragraph{Variance Experiment Setup}
We perform the variance comparison of $J_{\rm DV}$, $J_{\rm NWJ}$ and the proposed $J_{\rm RPC}$. The empirical experiments are performed using SimCLRv2 \citep{chen2020big} on CIFAR-10 dataset. We use a ResNet of depth $18$, with batch size of $512$. We train each objective with $30$K training steps and record their value. In Figure \ref{fig:var_batch}, we use a temperature $\tau=128$ for all objectives. Unlike other experiments, where hidden normalization is applied to other objectives, we remove hidden normarlization for all objectives due to the reality that objectives after normalization does not reflect their original values. From Figure \ref{fig:var_batch}, $J_{\rm RPC}$ enjoys lower variance and more stable training compared to  $J_{\rm DV}$ and $J_{\rm NWJ}$.

\paragraph{Minibatch Size Experimental Setup}
We perform experiments on the effect of batch size on downstream performances for different objective. The experiments are performed using SimCLRv2 \citep{chen2020big} on CIFAR-10 dataset, as well as the model from \citet{riviere2020unsupervised} on LibriSpeech-100h dataset \citep{panayotov2015librispeech}. For vision task, we use the default temperature $\tau=0.5$ from \citet{chen2020big} and hidden normalization mentioned in Section \ref{sec:experiment} for $J_{\rm CPC}$. For $J_{\rm RPC}$ in vision and speech tasks we use a temperature of $\tau=128$ and $\tau=16$ respectively, both without hidden normalization.

\subsection{Mutual Information Estimation} \label{appendix:mi}

Our method is compared with baseline methods CPC~\citep{oord2018representation}, NWJ~\citep{nguyen2010estimating}, JSD~\citep{nowozin2016f}, and SMILE~\citep{song2019understanding}. All the approaches consider the same design of $f(x,y)$, which is a 3-layer neural network taking concatenated $(x,y)$ as the input. We also fix the learning rate, the optimizer, and the minibatch size across all the estimators for a fair comparison. 

We present results of mutual information by Relative Predictive Coding using different sets of relative parameters in Figure \ref{fig:MI_rpc}. In the first row, we set $\beta = 10^{-3}$, $\gamma = 1$, and experiment with different $\alpha$ values. In the second row, we set $\alpha = 1$, $\gamma = 1$ and in the last row we set $\alpha=1$, $\beta=10^{-3}$. From the figure, a small $\beta$ around $10^{-3}$ and a large $\gamma$ around $1.0$ is crucial for an estimation that is relatively low bias and low variance. This conclusion is consistent with Section 3 in the main text. 

\begin{figure}[t!]
\begin{center}
\includegraphics[scale=0.27]{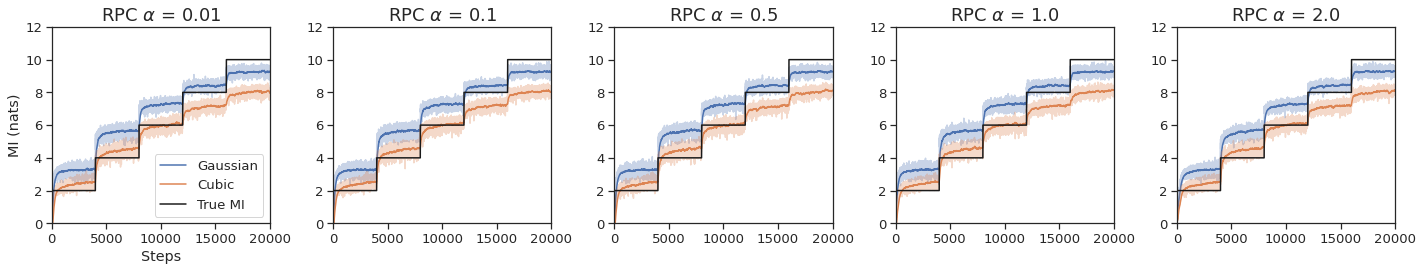}
\includegraphics[scale=0.27]{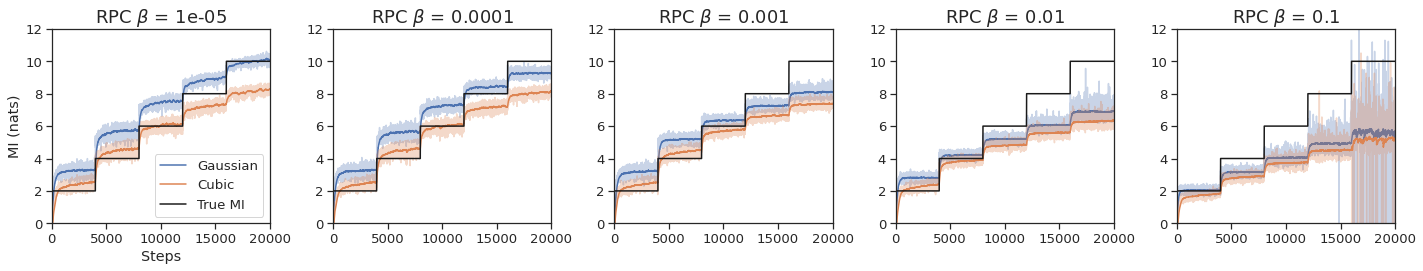}
\includegraphics[scale=0.27]{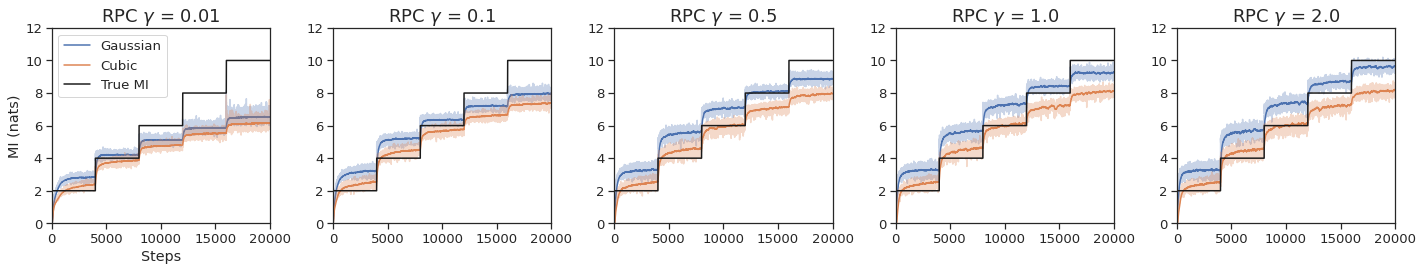}
\end{center}
\vspace{-5mm}
\caption{Mutual information estimation by RPC performed on 20-d correlated Gaussian distribution, with different sets of relative parameters.
}
\label{fig:MI_rpc}
\end{figure}

We also performed comparison between $J_{\rm RPC}$ and Difference of Entropies (DoE)~\citep{mcallester2020formal}. We performed two sets of experiments: in the first set of experiments we compare $J_{\rm RPC}$ and DoE when MI is large ($>100$ nats), while in the second set of experiments we compare $J_{\rm RPC}$ and DoE using the setup in this section (MI $< 12$ nats and MI increases by 2 per 4k training steps). On the one hand, when MI is large ($> 100$ nats), we acknowledge that DoE is performing well on MI estimation, compared to $J_{\rm RPC}$ which only estimates the MI around $20$. This analysis is based on the code from \url{https://github.com/karlstratos/doe}. On the other hand, when the true MI is small, the DoE method is more unstable than $J_{\rm RPC}$, as shown in Figure \ref{fig:MI_doe}. Figure \ref{fig:MI_doe} illustrates the results of the DoE method when the distribution is isotropic Gaussian (correctly specified) or Logistic (mis-specified). Figure~\ref{fig:MI} only shows the results using Gaussian.

\begin{figure}[t!]
\begin{center}
\includegraphics[scale=0.25]{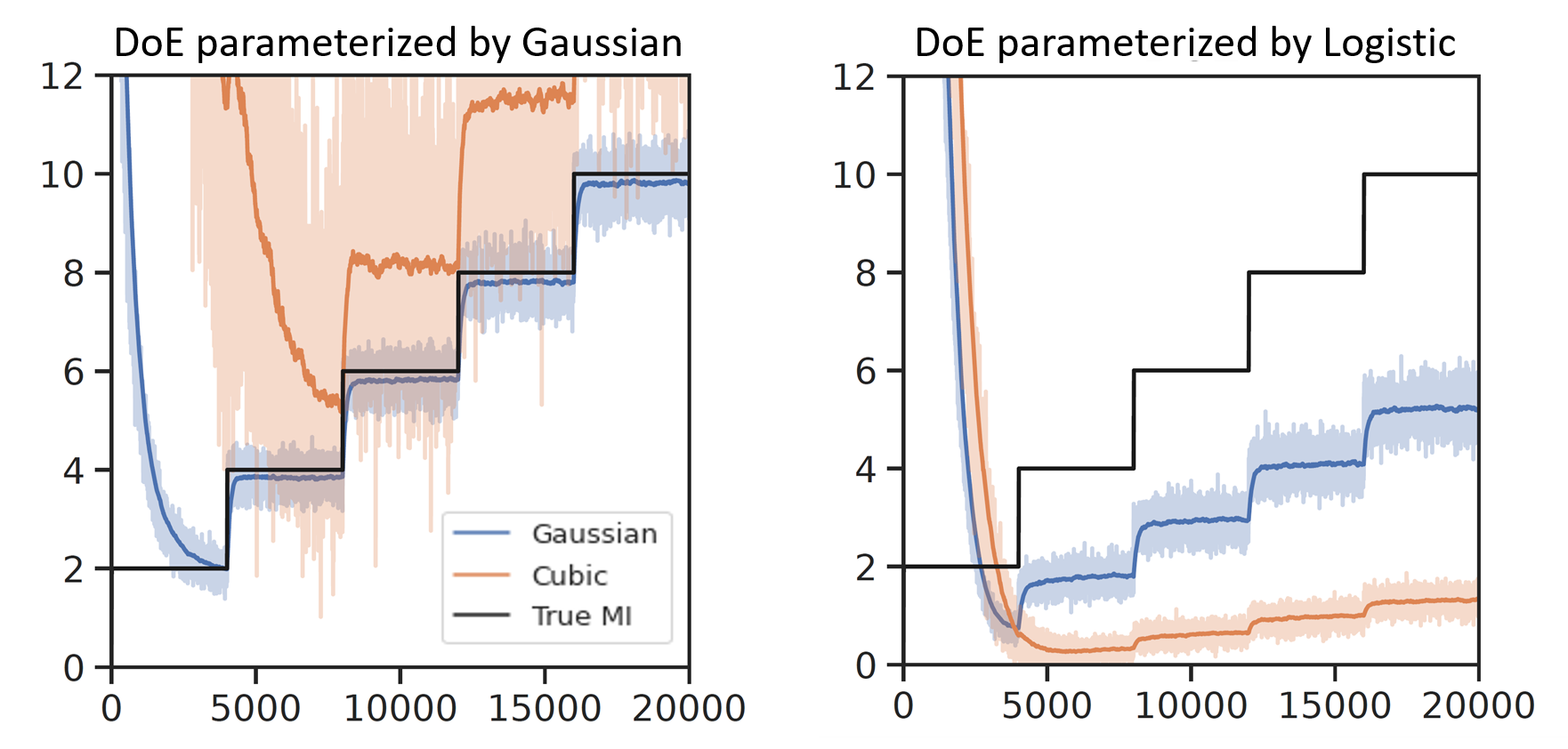}
\end{center}
\vspace{-5mm}
\caption{Mutual information estimation by DoE performed on 20-d correlated Gaussian distribution. The figure on the left shows parametrization under Gaussian (correctly specified), and the figure on the right shows parametrization under Logistic (mis-specified).}

\label{fig:MI_doe}
\end{figure}

\end{document}